\newcommand{\VaR}{\textup{VaR}}
\newcommand{\CVaR}{\textup{CVaR}}
\newcommand{\E}{\mathbb{E}}
\newcommand{\CDF}{\textup{CDF}}
\newcommand{\ub}{\textup{ub}}
\newcommand{\Bin}{\textup{Bin}}
\newcommand{\X}{\mathcal{X}}
\newcommand{\U}{\mathcal{U}}
\renewcommand{\P}{\mathcal{P}}
\DeclareMathOperator*{\argmin}{arg\,min}
\newtheorem{theorem}{Theorem}
\newtheorem{corollary}{Corollary}
\newtheorem{lemma}{Lemma}
\newtheorem{definition}{Definition}
\newtheorem*{remark}{Remark}
\newtheorem{assumption}{Assumption}
\begin{document}


\title{Guarantees on Robot System Performance Using Stochastic Simulation Rollouts}

\author{Joseph A. Vincent$^{1*}$, Aaron O. Feldman$^{1*}$, and Mac Schwager$^{1}$
\thanks{The NASA University Leadership initiative (grant \#80NSSC20M0163) provided funds to assist the authors with their research, but this article solely reflects the opinions and conclusions of its authors and not any NASA entity. This work was partly supported by ONR grant  N00014-23-1-2354.  The first author was also supported on a Dwight D. Eisenhower Transportation Fellowship. The second author was also supported by a National Science Foundation Graduate Research Fellowship under Grant No. 2146755.}
\thanks{$^{1}$Department of Aeronautics and Astronautics, Stanford University, Stanford, CA 94305, USA, {\texttt\footnotesize \{josephav, aofeldma, schwager\}@stanford.edu}}%
\thanks{$^*$ denotes equal author contribution}%
\thanks{The code associated with this work can be found at \href{https://github.com/StanfordMSL/performance_guarantees}{\texttt{https://github.com/StanfordMSL/performance\_guarantees}}}%
}

\maketitle

\begin{abstract}
We provide finite-sample performance guarantees for control policies executed on stochastic robotic systems. Given an open- or closed-loop policy and a finite set of trajectory rollouts under the policy, we bound the expected value, value-at-risk, and conditional-value-at-risk of the trajectory cost, and the probability of failure in a sparse cost setting. The bounds hold, with user-specified probability, for any policy synthesis technique and can be seen as a post-design safety certification. Generating the bounds only requires sampling simulation rollouts, without assumptions on the distribution or complexity of the underlying stochastic system. We adapt these bounds to also give a constraint satisfaction test to verify safety of the robot system. \textcolor{black}{We provide a thorough analysis of the bound sensitivity to sim-to-real distribution shifts and provide results for constructing robust bounds that can tolerate some specified amount of distribution shift.} Furthermore, we extend our method to apply when selecting the best policy from a set of candidates, requiring a multi-hypothesis correction. We show the statistical validity of our bounds in the Ant, Half-cheetah, and Swimmer MuJoCo environments and demonstrate our constraint satisfaction test with the Ant. Finally, using the 20 degree-of-freedom MuJoCo Shadow Hand, we show the necessity of the multi-hypothesis correction.
\end{abstract}


\begin{IEEEkeywords}
Probability and Statistical Methods, Optimization and Optimal Control, Motion and Path Planning, Risk-Sensitive Control
\end{IEEEkeywords}

\IEEEpeerreviewmaketitle


\section{Introduction} \label{sec:Intro}

It is essential that robots be able to operate safely and successfully under diverse sources of uncertainty, including uncertainty about their own dynamics and state, friction and contact forces, the future motion of other agents, and environment geometry. For example, robot manipulators must interact with objects having uncertain geometries or physical parameters, legged robots must locomote on uncertain terrain, and autonomous vehicles must avoid colliding with other agents whose future trajectories are uncertain. While it is common to have a simulation model reflecting these diverse sources of uncertainty, we rarely have access to closed-form mathematical models, making it difficult to provide rigorous performance and safety guarantees. We address this challenge, presenting statistical performance bounds and safety tests for arbitrary robotic systems given a finite set of trajectory samples from a stochastic simulator.

\begin{figure}
    \centering
    \includegraphics[width=\linewidth]{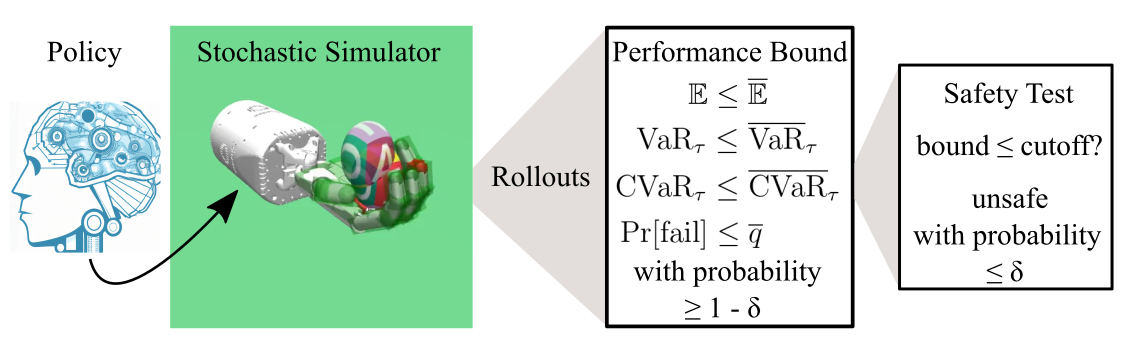}
    \caption{Overview of our method for bounding performance for a single policy using a stochastic simulator. The policy is executed in simulation $n$ times to collect trajectory rollouts. The cost or constraint function is evaluated for each rollout and these samples are used to form a distribution-free upper bound on a given performance measure (expected value, value-at-risk, conditional-value-at-risk, or probability of failure) that is guaranteed to hold with probability at least $1-\delta$. These probabilistic bounds may also be used to ensure safety by testing constraint satisfaction for the performance measures. The finite-sample bound guarantee ensures that these tests incorrectly accept a policy as safe with at most $\delta$ probability. We demonstrate this pipeline for several MuJoCo environments and extend the method to compare multiple policies for manipulating an egg of uncertain mass and friction.}
    \label{fig:single_plan_selection}
\end{figure}

Performance for stochastic robotic systems is typically quantified with an expected trajectory cost, or with risk-sensitive performance measures such as Value-at-Risk (VaR) or Conditional Value-at-Risk (CVaR). \textcolor{black}{These risk sensitive measures have been widely used in the optimization of financial portfolios~\cite{rockafellar2000optimization}, and are more recently being adopted for use in robotic control problems~\cite{ruszczynski2010risk,pavone}.} Performance can also be quantified by the probability of task success (e.g., probability that an object is not dropped, or a robot does not fall). Symmetrically, safety is often enforced by putting constraints on these performance measures: expected value constraints, VaR constraints, CVaR constraints, or constraints on success probability. We provide probabilistic bounds for all of these performance measures.

Most existing approaches for computing or bounding expected value, VaR, CVaR, or success probability (i) assume a known distribution for the uncertainty (e.g., Gaussian), (ii) use a large number of simulation rollouts to approximate uncertainty without formal guarantees, or (iii) provide formal guarantees that hold asymptotically as the number of samples approaches infinity. In contrast, our methods are both distribution-free (they require no knowledge of the underlying probability distribution), and finite-sample (they hold with a finite number of samples, not just asymptotically). A core insight is that the trajectory cost of the robotic system can be treated as a scalar random variable, and a stochastic simulator can be viewed as an elaborate random number generator, producing independent and identically distributed (IID) samples of the trajectory cost. We can therefore apply distribution-free, finite-sample statistical analysis tools. We only require two key assumptions: 
\begin{enumerate}
\item \emph{\textcolor{black}{The simulative dynamics model exactly represents the stochastic dynamics encountered during execution, in addition to modeling the roboticist's uncertainty in initial conditions and simulation parameters}}, and 
\item\emph{Successive simulations are IID, that is, there is no memory or distributional shift between trajectory rollouts.}
\end{enumerate}
\textcolor{black}{For the first assumption, there are three uncertainties the simulator may capture, uncertainty over (i) state transitions, (ii) initial conditions, and (iii) simulation parameters. We assume these uncertainties are modeled as random variables from some distribution as is common in other robotics problems such as state estimation/filtering.} 

We derive simple formulas using foundational statistical principles to compute upper bounds on expected value, VaR, and CVaR, as well as an upper bound on the probability of failure in a binary success/failure reward model. These bounds are probabilistic, that is, they are allowed to be wrong $\delta$ proportion of the time, for a user-specified error rate $\delta$. We further adapt these bounds to give constraint satisfaction tests for expected value, VaR, CVaR, and failure probability, with a guaranteed user-specified false positive rate (declaring the system safe, when it is actually unsafe). \textcolor{black}{The probability of the bound being valid ($1-\delta$) is often referred to as the \textit{confidence} or \textit{coverage} level of the bound. In Section~\ref{sec: bound_sensitivity} we investigate how the confidence level for each bound changes when there is sim-to-real distribution shift and Assumption 1 is not met. In Section~\ref{sec: robust_bounds} we detail how to retain a desired confidence level in light of such distribution shifts.} 

An overview of our method for bounding performance and testing constraint satisfaction for a policy is shown in Fig.~\ref{fig:single_plan_selection}. We also modify the bounds so they can be used to compare performance among multiple policies, as outlined in Fig.~\ref{fig:plan_selection}. Notably, we show that a multi-hypothesis correction is required to retain statistical guarantees when comparing multiple policies. We empirically demonstrate the validity of our bounds and constraint satisfaction tests in several MuJoCo~\cite{mujoco} environments simulated in Gymnasium~\cite{gymnasium_robotics2023github}, and verify our policy comparison bounds in a 20 degree-of-freedom MuJoCo Shadow Hand simulation manipulating an object with uncertain mass and friction.

Our bounds are independent of the system's complexity or dimensionality. They only depend on the number of rollouts and the user-specified error rate. Therefore, our method is appropriate for complex simulation models of high degree-of-freedom robots, featuring, e.g., discontinuities from contact, uncertainties in friction or reaction forces, fluidic or finite-element simulations for soft robots and deformable objects, and aerodynamic simulations for aerial robots. The bounds also apply regardless of how the simulation is derived, applying also when the simulation itself is learned (e.g, generative world models or multi-agent trajectory forecasters). The ``simulation'' can also be an experimental setup in which a control policy is repeatedly executed on a physical robot. Furthermore, our bounds are valid for open- or closed-loop policies, as well as deterministic or stochastic policies.  

We emphasize that we do not present a new policy optimization or planning technique in this work, but rather present a statistical method for bounding the performance of a given control policy or comparing performance among a set of policies. Our methods can be used as a verification tool to bound the performance or certify the safety of a policy obtained from any upstream optimizer (e.g., a reinforcement learning or optimal control design technique or even a large language model task planner). 

In summary, our primary contributions are
\begin{enumerate}
    \item Given an open- or closed-loop control policy, we \textcolor{black}{apply} probabilistic upper bounds for expected value, VaR, CVaR, and probability of task failure, from a finite set of simulated trajectory rollouts. 
    \item Similarly, we obtain a probabilistic test to verify the satisfaction of constraints on expected value, VaR, CVaR, or probability of failure. The test has a user-specified false acceptance rate.
    \item We describe a necessary multi-hypothesis correction to these performance bounds in the case of choosing the best among a finite set of candidate policies.
    \item \textcolor{black}{We provide analytic expressions for how the confidence of each bound changes as a function of sim-to-real distribution shift and we provide simple extensions to each bound to ensure a desired confidence level holds under a specified amount of sim-to-real distribution shift.}
\end{enumerate}

\textcolor{black}{We achieve the above for an arbitrarily complex simulator, learned or model-based, with diverse sources of uncertainty, and with any upstream policy generation or optimization approach. As evidence to this, we demonstrate our approach in several MuJoCo environments including with the 20 degree-of-freedom Shadow Hand.} The paper is organized as follows. We give related work in Section \ref{sec:Related}. In Section \ref{sec:Problem} we introduce our notation and define the problem setting. In Section \ref{sec: plan_evaluation} we present the distribution-free performance bounds and derive constraint satisfaction tests.  We empirically validate the bounds and constraint tests in MuJoCo simulations in Section \ref{sec: plan_eval_exp}. \textcolor{black}{In Section~\ref{sec: bound_sensitivity} we provide expressions for how the confidence level for each bound changes due to sim-to-real distribution shift and in Section~\ref{sec: robust_bounds} we detail how to retain a desired confidence level given some amount of distribution shift.} In Section \ref{sec: multi-hyp} we introduce the multi-hypothesis correction required when comparing bounds among multiple policies, and demonstrate the validity of this correction in Section \ref{sec:examples} in simulations with the MuJoCo Shadow Hand manipulating an uncertain object. We offer concluding remarks in Section \ref{sec:conclusion}, and give proofs of theorems and computational details of the simulation examples in the Appendix.


\section{Related Work} \label{sec:Related}

\subsection{Sample-Based Performance Quantification}

We build on an emerging literature that uses finite samples from simulation rollouts to produce and optimize for distribution-free guarantees on system performance. 
Using results from randomized optimization, the authors of~\cite{caltech_verification} construct distribution-free bounds on the VaR of a given robustness metric for a robotic system. Their bound on the VaR is a special case of our bound given in Theorem \ref{thm: var_bound}. Similar methods are used in~\cite{caltech_rl} to verify safety and robustness of a reinforcement learning policy and in~\cite{caltech_sim2real} to place probabilistic bounds on the error of a simulated model. 

The authors of~\cite{caltech_policy_synth} use distribution-free statistics to place bounds on coherent risk measures (such as $\textup{CVaR}$) and on the quality of optimization via random search. They use these results to formulate a method that randomly searches over policies and chooses the one with the least upper bound on risk. In~\cite{caltech_nonlinear}, the same bounds are used to find a nonlinear control plan that is better than a specified percentage of plans. As we show in this work, when policy cost is not deterministic, optimizing for the sample-based bound requires a multi-hypothesis correction to retain validity. 
This is a crucial step in the policy synthesis process that our paper addresses in Section \ref{sec: multi-hyp}.

The authors of~\cite{risk_verification} perform verification of closed-loop stochastic systems. Like us, they take a distribution-free finite-sample approach, but with some key differences. While they are primarily interested in verifying closed loop systems with neural network controllers, we take a broader view to systems which need not be differentiable, continuous, or defined in closed form. \cite{risk_verification} also discusses choosing the least risky controller from a set of controllers but fails to note the need for a multi-hypothesis correction. Lastly, we use a tighter VaR bound not based on the Dvoretzky–Kiefer–Wolfowitz (DKW) inequality \cite{dvoretzky1956asymptotic, massart1990tight} and take a different approach to constraints; we provide analysis for handling a variety of risk-sensitive constraints whereas~\cite{risk_verification} focuses on signal-temporal-logic (STL) constraints. \textcolor{black}{Lastly, none of these works address how bound confidence changes with distribution shift.}



\subsection{Risk-Sensitive Control}

Our method can be used to certify control policies obtained from existing risk-sensitive control design techniques. We classify the approaches for risk-sensitive control into three broad categories: parametric, distributionally robust, and sampling-based. 

In the parametric category are works imposing distributional and structural assumptions to efficiently quantify risk. The authors of~\cite{pavone} compute risk averse policies (in the sense of CVaR) for finite state and action Markov Decision Processes (MDPs) by solving a surrogate MDP. The authors of~\cite{cvar_barrier} synthesize a CVaR-safe controller for linear systems using barrier functions. \cite{lew_chance} enforces obstacle avoidance chance constraints assuming a Gaussian dynamics disturbance. While parametric approaches provide efficient mechanisms for quantifying and mitigating risk, we avoid the associated assumptions (i.e, on the dynamics or uncertainty distribution), so that our approach can be applied as a general certification step for arbitrary, complex systems.


Instead of assuming a particular uncertainty distribution, some work adopts a distributionally robust approach. The authors of~\cite{cvar_dro} and~\cite{data_driven} propose CVaR constrained control where the CVaR is first estimated empirically. Then, based on a known ambiguity set for the disturbance distribution (using the Wasserstein metric) about the empirical CVaR, distributionally robust CVaR constraints are enforced at runtime. The authors of~\cite{entropic} add a constraint on the entropic value at risk (EVaR) in their MPC formulation using its dual representation as the worst-case expectation within a KL divergence-based ambiguity set. \textcolor{black}{In contrast with these works focusing on distribution mismatch, we focus on the setting where our simulator provides samples from the true uncertainty distribution, but our bounds do not require knowledge of that distribution. However, we also provide extensions in Section \ref{sec: robust_bounds} to construct robust bounds that hold even under simulator mismatch. Specifically, we use the simulator and a given robustness tolerance to implicitly define a distributional ambiguity set when constructing the robust bounds.} 


Sampling-based approaches use repeated draws of empirical performance to estimate the risk without imposing distributional assumptions. In~\cite{var_multi} each agent in a team estimates its VaR using the empirical quantile of recently observed rewards. The authors of~\cite{robust_options} impose a CVaR constraint during policy optimization by rewriting the CVaR as a tail expectation so that it can be approximated from rollouts. During MPC,~\cite{pets} and~\cite{particle_mpc} repeatedly rollout controls under the stochastic dynamics and optimize for the sequence minimizing the average associated trajectory cost.

\cite{lew2023sample} shows that, under certain conditions, the sample average solution becomes asymptotically optimal (as the number of drawn samples approaches infinity). However, in this work we are interested in producing finite-sample performance guarantees.~\cite{lew2023riskaverse} applies~\cite{lew2023sample} to CVaR-constrained trajectory optimization and provides a finite-sample bound on the CVaR constraint. However, they leverage concentration inequalities which hold uniformly across all controls. Our bounds hold pointwise, necessitating a multi-hypothesis correction, but when comparing only a modest number of policies this can yield tighter guarantees. In summary, while sampling-based control provides context for our work, we are interested in augmenting such methods with finite-sample, distribution-free statistical guarantees.

\subsection{Conformal Prediction}
Conformal prediction (CP) is increasingly being used for producing distribution-free guarantees in robotics. Given exchangeable data (a weaker condition than IID) and a scoring function, CP produces a confidence interval on the score of a new data sample~\cite{shafer_tutorial}. In this work, we adapt analysis tools from conformal prediction to obtain our VaR bound, and to assess chance constraint satisfaction. Similar to our work, several papers applying CP to robotics have viewed full robot trajectories as one sample. The authors of~\cite{asl_conformal} use collected data of unsafe trajectories to augment robotic fault-detection systems, achieving a guaranteed false negative rate. In~\cite{upenn_1}, the authors use CP to augment learned forecasting models (e.g., predicting pedestrian motion) with a confidence set of possible trajectories. In~\cite{upenn_2} the authors use adaptive conformal prediction to now adapt their confidence sets using online data. As in much of the CP robotics literature, we also view trajectories as the fundamental sample to get IID data. However, instead of using offline data, we evaluate performance using a generative stochastic simulator. 

\subsection{Concentration Bounds}
Key to our approach is the use of sampling-based distribution-free concentration bounds for risk measures as this allows us to produce rigorous performance guarantees when planning with any arbitrarily complex simulator. 

Concentration bounds for $\textup{CVaR}$ were first given by~\cite{cvar_brown}, further refined by~\cite{cvar_wang}, and later improved upon by~\cite{cvar_thomas}. Each of these results requires bounds on the support of the random variable. Later,~\cite{cvar_kolla} provided concentration bounds in the case the random variable is sub-Gaussian or sub-exponential (weaker than boundedness by Hoeffding's Lemma~\cite{hoeffding}) which were improved upon in~\cite{cvar_bhat}. The CVaR bound we use in this paper is from~\cite{cvar_thomas}, but we express it in a simpler form and give an accompanying proof. We then extend this bound to give a novel bound on expected value. 


Unlike $\textup{CVaR}$ bounds, $\textup{VaR}$ bounds place no finite support or sub-Gaussianity restrictions. $\textup{VaR}$ bounds are often derived via the DKW inequality~\cite{cvar_kolla, var_dkw, var_sequential}. Although some authors claim these bounds as contributions to the field, an optimal VaR bound has been available since 2005~\cite{zielinski2005best} (although this optimal VaR bound is derived for continuous random variables, it can be extended for discontinuous random variables using the methods of \cite{scheffe1945non}). In this work we use a slightly suboptimal VaR bound because of its simple form and derivation which we present in the Appendix. The VaR bound we use is well-established in the statistics community, dating back to 1945 (see \cite{scheffe1945non, david2004order}), but its application to bounding policy performance is new. This classical bound is seemingly unknown to many practitioners and is strictly better than the bounds given in~\cite{cvar_kolla, var_dkw}.


Concentration bounds on the expected value are more studied than for the $\textup{VaR}$ or $\textup{CVaR}$. Hoeffding's inequality is arguably the most influential bound of this sort~\cite{hoeffding} although the bounds given later by Anderson are no worse and often better~\cite{exp_anderson}. More modern work has focused on bounds which are not defined in closed form~\cite{thomas_exp}, and concentration sequences~\cite{howard2021time}. \textcolor{black}{The expectation bound we use in this paper is equivalent to the Anderson bound~\cite{exp_anderson}, but derived and expressed in a slightly different form.}

Lastly, bounds for the probability of success parameter in the Bernoulli distribution (often called binomial confidence intervals) have been studied extensively. Unlike other concentration bounds, the majority of methods for binomial confidence intervals are not guaranteed to hold with a user-defined probability (e.g., Wald, Wilson, Jeffreys', Agresti-Coull, etc. as described in ~\cite{brown2001interval, pires2008interval}). Methods guaranteed to meet the desired confidence level include those by Clopper and Pearson~\cite{clopper1934use}, Sterne~\cite{sterne1954}, Crow~\cite{crow1956}, Eudey~\cite{eudey1949, lehmann_textbook}, and Stevens~\cite{stevens1950}. Of these approaches, only the bounds from Eudey and Stevens return confidence intervals with exactly the desired coverage and they do so by inverting randomized hypothesis tests. We show that the confidence interval we derive in Theorem \ref{thm: feasibility_bound} is analogous to the one-sided Clopper-Pearson interval, which is known to be unimprovable amongst non-randomized approaches~\cite{wang_clopper}.

\section{Problem Setting} \label{sec:Problem}
Here we introduce notation and formalize the problem of quantifying performance and safety for a stochastic robotic system. For a given \textcolor{black}{fixed} time horizon $T$, we consider either an open-loop policy as a sequence of control actions $(U_0, \ldots, U_{T-1})$, or a closed-loop policy as a mapping (deterministic or stochastic) from state to action $U_t \sim \pi_t(X_t)$. We use the term policy to describe all of these cases, and cover all cases with the notation $\U$ to represent the stochastic sequence of control actions obtained by executing the policy in simulation. When executing the policy, the control actions drive state evolution from a random starting state $X_0$ via stochastic dynamics $F_t$
\begin{subequations}
    \begin{gather} \label{eq: IC}
        X_0 \sim \mathcal{X}_0, \\
        X_{t+1} \sim F_t(X_t, U_t) \ \ t = 0, ..., T-1.\label{eq: dynamics}
    \end{gather} \label{eq: sim_setup}
\end{subequations}

We do not assume access to an explicit functional form or distribution for $F_t$, but instead we assume we can sample $X_{t+1} \sim F_t(X_t, U_t)$ IID from a simulation of the system. We write the stochastic state trajectory as $\X = (X_0, \ldots, X_T)$, as with the control policy $\U$. The trajectory cost $J$ associated with a policy and a state sequence realization is composed of stage-wise costs $c_t$ as 
\begin{equation} \label{eq: cost}
    J(\X, \U) = c_T(X_T) + \sum_{t=0}^{T-1} c_t(X_t, U_t).
\end{equation}
As an important alternative case, we also consider the sparse reward setting often seen in reinforcement learning. We let $J = 1$ denote task failure and $J = 0$ denote task success, to keep the interpretation of $J$ as a cost rather than a reward.  

In addition to a cost function, we also consider a constraint function $g$ which may be used to impose trajectory constraints for safety (such as avoiding collisions, or avoiding control input limits), or for task success (such as not dropping an object, not falling down, or attaining a discrete goal). We consider the generated trajectory a success when it satisfies a given set of trajectory constraints,
    \begin{align} \label{eq: constr}
    g(\X, \U) \le 0.
\end{align}
The above constraint also applies in the binary success/failure setting, in which case $g = 0$ is task success and $g = 1$ is task failure, again with the convention of lower $g$ being safer. \textcolor{black}{When executing the policy, we assume that cost $J$ can still be defined even when the trajectory violates safety constraints and do not truncate or reject such trajectories. Rather, ``soft'' safety penalties may be incorporated by modifying the stage-wise costs $c_t$ in $J$, in addition to $g$ which separately encodes ``hard'' safety constraints.}

Though the policy is fixed, the associated cost $J$ and constraint function $g$ evaluate to random values due to the stochastic dynamics, which generate different state sequence realizations on different runs under the same policy. Therefore, to measure system performance, or to impose safety constraints, we must define a summarizing statistic over the cost or constraint functions, which we call a performance measure. 

\begin{figure}
    \centering
    \includegraphics[width=\linewidth]{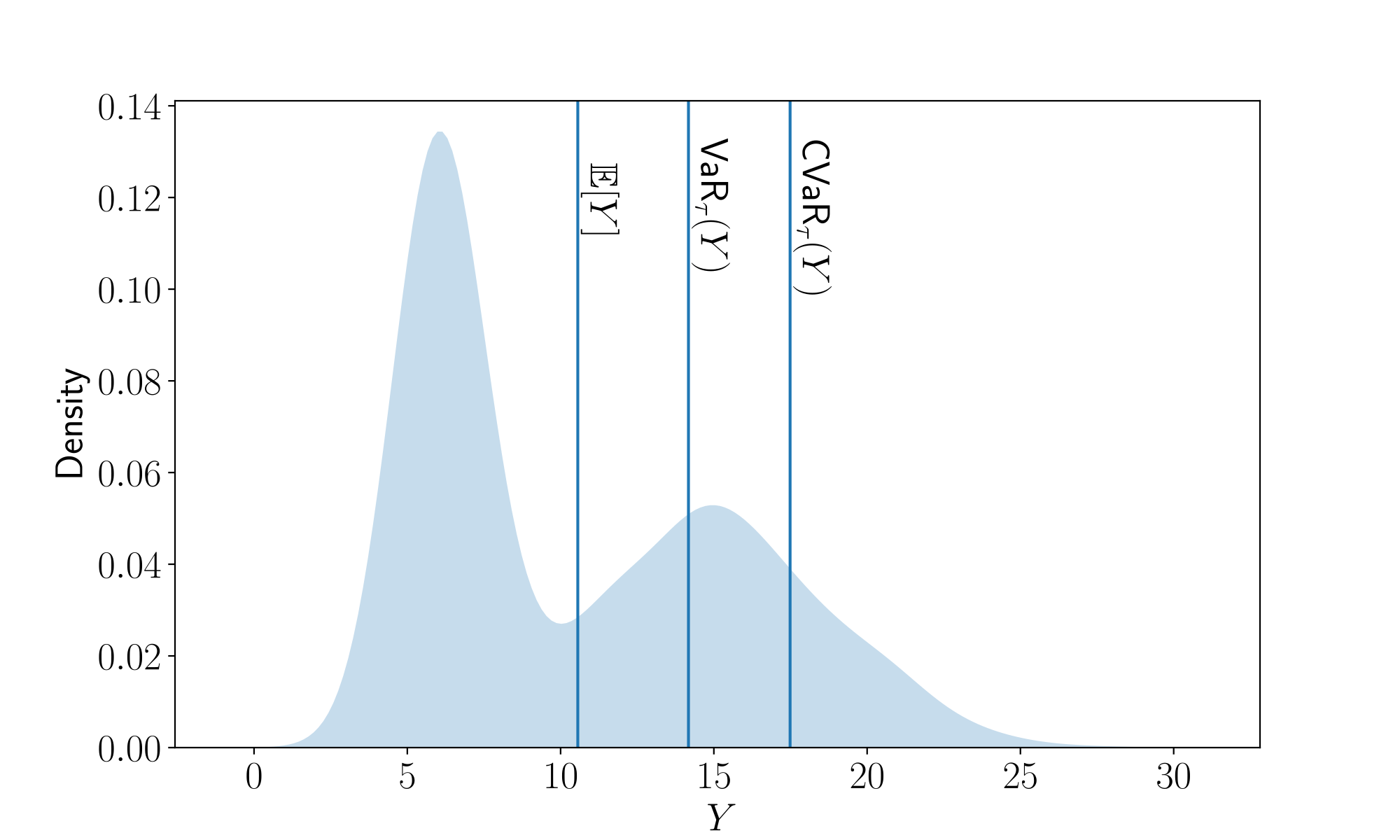}
    \caption{Visualization of the expected value, $\VaR_\tau$, and $\CVaR_\tau$ for an example distribution. Classic stochastic optimal control and reinforcement learning both seek to minimize the expected value of the total cost distribution. Risk-sensitive stochastic optimal control and reinforcement learning consider other measures of performance, such as VaR or CVaR of the cost.}
    \label{fig:dist_cartoon}
\end{figure}

\subsection{Performance Measures}
Consider a scalar random variable $Y$, representing either the value of the cost function $J$ or the constraint function $g$. We define a performance measure $\P(Y)$ as a summary statistic for the random variable $Y$. The most common choice of performance measure is the expected value $\mathcal{P}(Y) = \mathbb{E}[Y]$. However, to be more conservative, we often consider alternative measures that are risk-sensitive, such as the VaR or CVaR of the trajectory cost. $\VaR$, expected value, and $\CVaR$ are defined below, and visualized in Fig.~\ref{fig:dist_cartoon}. Alternatively, in the binary success/failure setting, the probability of failure is the natural performance measure to be minimized. In this work, we consider finite-sample methods for upper bounding all of these quantities.  

Recall the cumulative distribution function (CDF), which exists for any scalar random variable, whether continuous or not, is defined as $\CDF(y) \coloneqq \Pr[Y \le y]$.  We define $\VaR_\tau(Y)$ in terms of the CDF as

\begin{definition}[Value-at-Risk]
\label{Def:VaR}
    Given a scalar random variable $Y$, the value-at-risk of $Y$ at quantile $\tau \in (0,1)$ is
    \begin{align}
        \textup{VaR}_\tau (Y) \coloneqq \textup{inf}\{y \mid \CDF(y) \ge \tau \}.
    \end{align}
    If $Y$ has an invertible CDF then
    \begin{align}
        \VaR_\tau(Y) = \{y \mid \CDF(y) = \tau\} = \CDF^{-1}(\tau).
    \end{align}
\end{definition}

In plain words, $\textup{VaR}_\tau = y$ ensures that $\tau$ proportion of the probability mass of the random variable $Y$ is below the value $y$.  The $\textup{VaR}$ can be seen as a generalization of the inverse of the CDF and closely relates to the quantile as used in statistics.    

Recall the standard definition of the expected value of a continuous scalar random variable is $\E[Y] = \int_{-\infty}^{\infty}yp(y)\,dy$, where $p(y)$ is the probability density function. In fact, a more general definition of the expected value can be stated in terms of the $\VaR_\tau$ as follows,

\begin{definition}[Expected Value]
\label{Def:E}
    Given scalar random variable $Y$, the expected value of $y$ is
    \begin{align}
        \E[Y] \coloneqq \int_0^1\VaR_\tau(Y)\,d\tau.
    \end{align}
\end{definition}
The above definition applies to any scalar random variable (continuous, discrete, or mixed). In the case of a continuous random variable with invertible CDF, one can recover the standard definition through a change of variables\footnote{With the change of variables $\tau = \CDF(y)$, we have $d\tau = p(y)\,dy$ (recalling that $p(y) = dCDF(y)/dy$).  The domain of integration $\tau\in(0,1)$ becomes $y\in(-\infty, \infty)$, and since $\VaR_\tau(Y) = \CDF^{-1}(\tau)$, the integrand becomes $\CDF^{-1}(\CDF(y)) = y$, leading to $\int_{-\infty}^{\infty}yp(y)\, dy$.} $\tau = \CDF(y)$. 

One criticism of $\textup{VaR}$ as a risk-sensitive performance measure is that it ignores high-cost outcomes that may lie in the $1-\tau$ rightmost tail of the distribution. Conditional Value-at-Risk ($\textup{CVaR}$), defined below, addresses this concern.

\begin{definition}[Conditional Value-at-Risk]
\label{Def:CVaR}
    Given scalar random variable $Y$ and $\tau \in [0,1)$, the conditional value-at-risk of $Y$ at quantile $\tau$ is
    \begin{align}        
    \CVaR_\tau(Y) \coloneqq \frac{1}{1-\tau}\int_\tau^1\VaR_\gamma(Y)\,d\gamma.
    \end{align}
    If $Y$ has an invertible  CDF then
    \begin{align}
        \CVaR_\tau(Y) = \E[ Y \mid Y \ge \VaR_\tau(Y)].
    \end{align}
\end{definition}

There are several equivalent definitions for $\CVaR_\tau(Y)$. We prefer the one above as it applies to all scalar random variables (continuous, discrete, or mixed), and it highlights the clear relationship between $\VaR_\tau(Y)$ and $\E[Y]$. We can see that $\CVaR_\tau(Y)$ is simply the expected value of $Y$ taken over the top $1-\tau$ tail of the probability mass (and re-normalized by $1-\tau$ to ensure it remains a valid expectation). Taking $\tau = 0$ recovers the expected value. In contrast to $\VaR_\tau$, $\CVaR_\tau$ captures high-cost tail events. It therefore incorporates the worst case cost situations, and is often quite conservative. $\CVaR_\tau(Y)$ is always greater than or equal to both $\E[Y]$ and $\VaR_\tau(Y)$. However, $\E[Y]$ and $\VaR_\tau(Y)$ may lie in either order, depending on  $\tau$ and the distribution of the random variable $Y$. 

For simple distributions (e.g., Gaussians) $\E[Y]$, $\VaR_\tau(Y)$, and $\CVaR_\tau(Y)$ can be found. However, in more realistic robotics scenarios, distributions are non-Gaussian and usually are unknown, so none of these performance measures can be obtained in closed form. This motivates the need in this paper to give finite-sample bounds for these quantities.

We next formalize the probability of failure as a performance measure for problems with binary success/failure cost models.
\begin{definition}[Failure Probability]
    Given a Bernoulli random variable $Y$, we refer to $q = \Pr[Y = 1]$ as the failure probability.
\end{definition}
Note that failure probability is actually a special case of expected value, since $\Pr[Y = 1] = \E[Y]$ for a binary random variable $Y$. However, the binary case allows for significantly tighter bounds than the general bounds on expected value. We therefore treat this case separately.

\subsection{Cost Performance and Safety Performance}
The performance measures above can be applied to the cost function $\P(J)$ to measure cost performance, or to the constraint function $\P(g)$ to measure safety performance. In this paper, we present bounds $\overline{P}$ to probabilistically bound the cost performance from a finite set of simulation rollouts. Specifically, we give bounds of the form
\begin{equation}
\Pr[\P(J) \le \overline{P}] \ge 1-\delta,
\end{equation}
where $\delta$ is a user-defined error rate for the bound.

Similarly, safety is often quantified by putting constraints on a performance measure applied to the constraint function, $\P(g) \le C$, for some constant $C$. Specifically, we consider safety as a bound on any of the above performance measures,
\begin{subequations} \label{eq: g_constr}
    \begin{align}
        \E[g] &\leq C_{\E},  \label{eq: g_e} \\
        \VaR_\tau(g) &\leq C_{\VaR}, \label{eq: g_var} \\
        \CVaR_\tau(g) &\leq C_{\CVaR}, \label{eq: g_cvar} \\
        \Pr[g = 1] &\le C_{q}. \label{eq: g_p}
    \end{align}
\end{subequations}
where in (\ref{eq: g_p}) we consider the case where $g$ is assumed to be binary with $g = 1$ indicating failure.

The above formulation in (\ref{eq: g_constr}) already captures chance constraints (as seen in stochastic optimal control) which require that the trajectory constraint (\ref{eq: constr}) be satisfied with sufficiently high probability. This follows because constraining $\VaR_\tau(g)$ in (\ref{eq: g_var}) is equivalent to imposing a chance constraint on $g$,
\begin{equation}
    \VaR_\tau(g) \leq 0 \iff \Pr[g \leq 0] \ge \tau.
\end{equation}

In fact, chance constraints can also be modeled as a binary success/failure of the type (\ref{eq: g_p}), where $g = 1$ denotes the event that the trajectory fails the constraint (\ref{eq: constr}), and $g = 0$ for a successful trajectory satisfying the constraint.

Notice we cannot directly evaluate whether or not the constraints in (\ref{eq: g_constr}) hold, because we cannot compute their left hand side when we only have access to a simulator. We therefore define a test for whether $\overline{P} \le C$, using the finite-sample bound $\overline{P}$. If $\overline{P}$ passes this test, we declare the constraint satisfied. In the following section we prove that such a test can be constructed with a user-specified false positive error rate, only concluding an unsafe policy is safe some small fraction of the time.


\section{Finite-Sample Performance Bounds} \label{sec: plan_evaluation}
In this section we provide finite-sample upper bounds for the expected value, VaR, CVaR, and failure probability. We also define constraint satisfaction tests based on these bounds. We require access to IID samples of the total cost $J$ or constraint function $g$ under the policy being evaluated, which we assume are obtained from \textcolor{black}{repeatedly executing the policy for a given time horizon $T$ in }a stochastic simulator of the robot system. Each bound presented holds probabilistically, with probability \textcolor{black}{at least} $1-\delta$, where the randomness stems from the bound itself being a function of a finite set of random samples. In fact, if no distributional assumptions are made, one can only formulate bounds that hold probabilistically (see Section 5 of~\cite{vovk_conditional}). We refer to $\delta$ as the user-specified error rate for the bound \textcolor{black}{and to the guaranteed probability $1 - \delta$ that the bound holds as the confidence level of the bound}. All proofs are deferred to the Appendix. An overview of the method is shown in Fig.~\ref{fig:single_plan_selection}. 

As explained previously, our results rest upon two foundational assumptions.
\begin{assumption}[Accurate Simulation Model]
\label{Ass:AccurateSim}
\textcolor{black}{The simulative dynamics model in (\ref{eq: sim_setup}) exactly represents the stochastic dynamics encountered during execution, in addition to modeling the roboticist's uncertainty in initial conditions and simulation parameters.}
\end{assumption}

\begin{assumption}[IID]
\label{Ass:IID}
Successive simulations are independent and identically distributed, that is, there is no memory or distributional shift between trajectory rollouts.
\end{assumption}
Of course, these assumptions will never exactly hold in practice, as there is always some sim-to-real gap. However, qualitatively, the closer the simulation model is to the real robotic system, the more reliable the bounds will be. \textcolor{black}{ In Section \ref{sec: bound_sensitivity} we address the sensitivity of the bounds to some sim-to-real gap, and in Section \ref{sec: robust_bounds} we show how to construct bounds which are robust to this gap.}

\subsection{Performance Bounds}
The bounds make use of the concept of order statistics, defined as follows.

\begin{definition}[Order Statistics] For a set of $n$ samples $J_{1:n}$ drawn IID, we let $J_{(k)}$ denote the $k$th order statistic, obtained by arranging the samples in order from smallest to largest and taking the $k$th element in the sequence.
\end{definition}

\begin{definition}[Binomial Distribution]
Let $\textup{Bin}(k;m,p)$ denote the Binomial cumulative distribution function, with $m$ trials, success probability $p$, evaluated at $k$ successes. 
\end{definition}

\begin{theorem}[VaR Bound] \label{thm: var_bound}
    Consider $\tau, \delta \in (0,1)$ and $n$ IID cost samples $J_{1:n}$, and let $k$ be the smallest index such that $\Bin(k-1;n,\tau) \ge 1 - \delta$. We have the following probabilistic upper bound on $\VaR_\tau(J)$,
    \begin{align}
        \overline{\VaR}_\tau \coloneqq J_{(k)},
    \end{align}
    which has the property
    \[
    \Pr[ \VaR_\tau(J) \le \overline{\VaR}_\tau ] \ge 1-\delta.
    \]
    A feasible value for $k$ exists when $n \geq \lceil \ln(\delta) / \ln(\tau) \rceil$, i.e., $n$ is large enough to ensure $\textup{Bin}(n-1;n,\tau) \ge 1 - \delta$.
\end{theorem}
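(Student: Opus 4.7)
The plan is to reduce the event $\{\overline{\VaR}_\tau < \VaR_\tau(J)\}$ to an event about a binomial count, then exploit monotonicity of the binomial CDF in its success parameter. First I would introduce the random count $N = \#\{i \in \{1,\dots,n\} : J_i < \VaR_\tau(J)\}$ and observe that, since $J_{(k)}$ is the $k$th smallest sample, the event $J_{(k)} < \VaR_\tau(J)$ is exactly the event $\{N \ge k\}$.

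Next I would bound the per-sample probability $p \coloneqq \Pr[J < \VaR_\tau(J)]$. By Definition~\ref{Def:VaR}, $\VaR_\tau(J)$ is the infimum of $\{y : \CDF(y) \ge \tau\}$, so for every $y < \VaR_\tau(J)$ one has $\CDF(y) < \tau$. Taking a limit from below yields $p = \lim_{y \uparrow \VaR_\tau(J)} \CDF(y) \le \tau$. Since the samples are IID (Assumption~\ref{Ass:IID}), $N \sim \Bin(n,p)$.

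Then I would apply monotonicity: the Binomial CDF $\Bin(k-1;n,\cdot)$ is non-increasing in the success probability, so from $p \le \tau$,
\begin{equation*}
\Pr[N \le k-1] = \Bin(k-1;n,p) \ge \Bin(k-1;n,\tau) \ge 1-\delta,
\end{equation*}
where the last inequality is the defining property of $k$. Therefore $\Pr[J_{(k)} < \VaR_\tau(J)] = \Pr[N \ge k] \le \delta$, which gives the claimed coverage $\Pr[\VaR_\tau(J) \le \overline{\VaR}_\tau] \ge 1-\delta$. Finally, for the feasibility statement I would take $k = n$ and use $\Bin(n-1;n,\tau) = 1-\tau^n$; requiring $1-\tau^n \ge 1-\delta$ is equivalent to $\tau^n \le \delta$, and solving for $n$ (noting $\ln \tau < 0$) yields $n \ge \lceil \ln(\delta)/\ln(\tau)\rceil$.

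The one subtle point to get right is the strict-versus-non-strict inequality in the definition of $p$: one must use $\Pr[J < \VaR_\tau(J)]$ rather than $\Pr[J \le \VaR_\tau(J)]$, because for discrete or mixed distributions the latter can exceed $\tau$ while the former cannot. I expect this to be the main obstacle, since a careless reading of the $\VaR$ definition produces the wrong binomial parameter and breaks the monotonicity step; everything else is a short calculation once this is handled correctly.
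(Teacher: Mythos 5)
Your proposal is correct and follows essentially the same route as the paper: the paper's proof likewise reduces the event to the binomial count $\sum_i \mathbbm{1}(J_i < \VaR_\tau(J))$, proves $\Pr[J_i < \VaR_\tau(J)] \le \tau$ as a standalone lemma via the same left-limit/infimum argument you describe, and then invokes stochastic dominance of the binomial CDF in its success parameter to get $\Pr[N \le k-1] \ge \Bin(k-1;n,\tau) \ge 1-\delta$. The strict-inequality subtlety you flag is precisely the content of the paper's Lemma~\ref{lemma}, so your treatment matches the intended argument.
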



The $\VaR$ bound above simply chooses one of the order statistics based on a test involving the cumulative binomial distribution. Its proof (in the Appendix) is inspired by similar analyses in Conformal Prediction~\cite{shafer_tutorial, gentle_intro}. In our experience, this bound is considerably tighter than other $\VaR$ bounds in the recent literature (e.g.,~\cite{cvar_kolla, var_dkw}), and tends to be tighter in practice than the $\CVaR$ and $\E$ bounds below. As previously stated, the form of this bound is well known in the statistics literature~\cite{scheffe1945non, david2004order}, but its application to bounding policy performance is new. The $\VaR$ bound in~\cite{caltech_policy_synth} is a special case of this one, which considers the bound arising from the largest order statistic, $J_{(n)}$. 

Before stating the $\CVaR$ and $\E$ bounds, we require an additional assumption.

\begin{assumption}[Almost Sure Upper Bound]
\label{Ass:UpperBound}
    We have an almost sure upper bound $J_\ub$ such that $\Pr[J \le J_\ub] = 1$.
\end{assumption}

It may seem circular to require one upper bound in order to produce another upper bound.  The idea is to combine the order statistics $J_{(i)}$ with the almost sure upper bound to produce a significantly tighter bound. In fact, any finite sample upper bound on $\CVaR$ or $\E$ requires knowledge of such an {\it a priori} known bound on the right tail of the distribution of $J$. Two common choices are an almost sure upper bound (as we assume here) or a sub-Gaussian assumption. Without such a tail bound one can adversarially construct a distribution that violates any claimed $\CVaR$ or $\E$ bound by placing a finite probability mass arbitrarily far to the right in the distribution, pulling both $\CVaR$ and $\E$ far enough right to violate the claimed bound. By contrast $\VaR$ ignores the tail, so finite sample bounds on $\VaR$ can be constructed without a priori bounds on the tail. 

Since $J$ is computed as the sum of $T$ stage costs it suffices to simply find bounds on the stage cost and multiply these by $T$ to bound the trajectory cost. For some cost functions bounds may be computed analytically, especially in the case of bounded state and control spaces. Otherwise, in practice one can always clip the value of the cost function between some user-defined bounds, bounding the support of the total cost by construction. Computation of support bounds can be done offline and tight support bounds are not needed, but tighter support bounds on $J$ will lead to tighter bounds on $\mathbb{E}[J]$ and $\textup{CVaR}_\tau(J)$.

Finally, we define a constant that arises in both $\CVaR$ and $\E$ bounds, originating from the application of the DKW bound~\cite{dvoretzky1956asymptotic, massart1990tight} in their derivation, as discussed in the proofs in the Appendix.

\begin{definition}[DKW Gap]
    \label{Def:DKWGap}
    We define the DKW gap as
    \[
        \epsilon(\delta, n) = \sqrt{\frac{-\ln{\delta}}{2n}}.
    \]
\end{definition}

\begin{theorem}[Expected Value Bound] \label{thm: exp_bound}
    Consider  $\delta \in (0,0.5]$, an almost sure upper bound $J_\ub$, and $n$ IID cost samples $J_{1:n}$. Let $k$ be the smallest index such that $\frac{k}{n} - \epsilon \ge 0$. We have the following probabilistic upper bound on $\E[J]$,
    \begin{align}
        \overline{\E} & \coloneqq \epsilon J_{ub} +
        \left(\frac{k}{n}- \epsilon\right)J_{(k)} + \frac{1}{n}\sum_{i = k+1}^nJ_{(i)},
    \end{align}
    which has the property
    \[
        \Pr[\E[J] \le \overline{\E}] \ge 1-\delta.
    \]
    We require the number of samples $n \ge -\frac{1}{2}\ln(\delta)$, to ensure $\epsilon \le 1$. If $k = n$, the summation on the right is ignored. For $n < -\frac{1}{2}\ln(\delta)$ we default to the almost sure bound $J_{\ub}$. 
\end{theorem}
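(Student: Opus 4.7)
The plan is to combine the quantile representation of the expectation from Definition \ref{Def:E}, namely $\E[J] = \int_0^1 \VaR_\tau(J)\,d\tau$, with a uniform tail bound on the empirical CDF obtained from the one-sided Dvoretzky--Kiefer--Wolfowitz inequality. The DKW gap $\epsilon(\delta,n)=\sqrt{-\ln\delta/(2n)}$ is calibrated so that, with probability at least $1-\delta$, the true CDF $F$ and the empirical CDF $\hat F_n$ built from the $n$ IID samples satisfy $F(y) \ge \hat F_n(y) - \epsilon$ for every $y$. The remainder of the argument will condition on this high-probability event.

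First I would translate this uniform CDF bound into a pointwise upper bound on the quantile function. Since $\hat F_n(y) - \epsilon \ge \tau$ implies $F(y) \ge \tau$, we get $\VaR_\tau(J) \le \inf\{y : \hat F_n(y) \ge \tau+\epsilon\}$. Because $\hat F_n$ is the step function that jumps from $(i-1)/n$ to $i/n$ at $J_{(i)}$, this infimum equals $J_{(\lceil n(\tau+\epsilon)\rceil)}$ whenever $\tau+\epsilon \le 1$. For $\tau\in(1-\epsilon,1]$ the empirical estimate offers no information in the upper tail, so I will fall back on Assumption \ref{Ass:UpperBound} and use $\VaR_\tau(J) \le J_\ub$ on that slice. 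Splitting the quantile integral at $\tau=1-\epsilon$ therefore yields
\[
\E[J] \;\le\; \int_0^{1-\epsilon} J_{(\lceil n(\tau+\epsilon)\rceil)}\,d\tau \;+\; \epsilon\, J_\ub.
\]

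The remaining step is to simplify the first integral into the stated closed form. The change of variables $u = n(\tau+\epsilon)$ converts it to $\tfrac{1}{n}\int_{n\epsilon}^{n} J_{(\lceil u\rceil)}\,du$, and since $\lceil u\rceil = i$ on $(i-1,i]$, the integrand is piecewise constant. With $k$ defined as in the theorem (the smallest index with $k/n \ge \epsilon$, equivalently $k=\lceil n\epsilon\rceil$), the subinterval of $[n\epsilon,n]$ lying in $(k-1,k]$ has length $k - n\epsilon$ and contributes $(k/n-\epsilon)J_{(k)}$, while each full subinterval $(i-1,i]$ for $i=k+1,\dots,n$ contributes $J_{(i)}/n$. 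Adding these pieces to $\epsilon J_\ub$ gives exactly $\overline{\E}$.

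The main obstacle is bookkeeping at the boundary: ensuring the coefficient $(k/n-\epsilon)$ on $J_{(k)}$ emerges cleanly from the change of variables in both the integer and non-integer cases of $n\epsilon$, and verifying the degenerate endpoints $k=n$ (empty summation, recovered automatically) and $\epsilon=1$ (where the bound collapses to $J_\ub$). The stated hypotheses $\delta \in (0, 0.5]$ and $n \ge -\tfrac{1}{2}\ln\delta$ together guarantee $\epsilon \le 1$, hence $k \le n$, so the split at $\tau=1-\epsilon$ is well-defined; for smaller sample sizes one defaults to the trivial bound $J_\ub$ as the theorem states.
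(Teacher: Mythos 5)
Your proposal is correct and follows essentially the same route as the paper: the paper proves the expected value bound as the $\tau=0$ special case of its CVaR bound, and that CVaR proof is exactly your argument — a one-sided DKW uniform lower bound on the CDF, inverted to a staircase upper bound on the quantile function (with $J_\ub$ covering the uninformative top $\epsilon$ slice), then integrated over $\tau$. Your change of variables $u = n(\tau+\epsilon)$ is just a different bookkeeping device for the same piecewise-constant integration the paper carries out directly over the intervals $(\tfrac{i}{n}-\epsilon,\tfrac{i+1}{n}-\epsilon]$.
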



\begin{theorem}[CVaR Bound] \label{thm: cvar_bound}
    Consider $\tau \in [0,1)$, $\delta \in (0,0.5]$, an upper bound $J_\ub$, and $n$ IID cost samples $J_{1:n}$. Let $k$ be the smallest index such that $\frac{k}{n} - \epsilon - \tau \ge 0$. We have the following probabilistic upper bound on $\CVaR_{\tau}$,
    \begin{align}
        \overline{\CVaR}_\tau \coloneqq \frac{1}{1-\tau}\Big[\epsilon J_{ub} + 
        \left(\frac{k}{n}- \epsilon - \tau\right)J_{(k)} + \frac{1}{n}\sum_{i = k+1}^nJ_{(i)}\Big],
    \end{align}
    which has the property
    \[
        \Pr[\CVaR_{\tau}(J) \le \overline{\CVaR}_{\tau}] \ge 1-\delta.
    \]
    We require the number of samples $n \ge -\frac{1}{2}\ln(\delta)/(1-\tau)^2$, to ensure $\epsilon \le 1 - \tau$. If $k = n$, the sum on the right is ignored.  For $n < -\frac{1}{2}\ln(\delta)/(1-\tau)^2$ we default to the almost sure bound $J_{\ub}$.
\end{theorem}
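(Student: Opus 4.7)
The plan is to leverage the integral representation of CVaR given in Definition 3, namely $\CVaR_\tau(J) = \frac{1}{1-\tau}\int_\tau^1 \VaR_\gamma(J)\,d\gamma$, and to upper bound the integrand uniformly in $\gamma$ using a DKW-based quantile inequality. The $\CVaR$ bound then reduces to carefully integrating a step function.

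First I would invoke the one-sided Dvoretzky--Kiefer--Wolfowitz inequality: with probability at least $1-\delta$, the empirical CDF $\hat{F}_n$ of the IID cost samples satisfies $F(y) \ge \hat{F}_n(y) - \epsilon$ for all $y$, where $\epsilon = \epsilon(\delta,n) = \sqrt{-\ln\delta/(2n)}$ is the DKW gap (this yields the $1-\delta$ event of the theorem). On this good event, for any $\gamma \in [0,1]$, any $y$ with $\hat{F}_n(y) \ge \gamma + \epsilon$ forces $F(y) \ge \gamma$, hence $\VaR_\gamma(J) \le y$. Specializing to the empirical quantile gives a uniform upper bound on the VaR curve:
\[
\VaR_\gamma(J) \le \begin{cases} J_{(\lceil n(\gamma+\epsilon)\rceil)} & \text{if } \gamma + \epsilon \le 1,\\ J_{\ub} & \text{if } \gamma+\epsilon > 1.\end{cases}
\]
The hypothesis $n \ge -\tfrac{1}{2}\ln(\delta)/(1-\tau)^2$ is exactly what ensures $\epsilon \le 1-\tau$, so that $\tau+\epsilon \le 1$ and the index $k = \lceil n(\tau+\epsilon)\rceil$ defined in the theorem satisfies $k \le n$.

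Next I would integrate this pointwise bound over $\gamma \in [\tau,1]$ and divide by $1-\tau$. Splitting the interval at $1-\epsilon$ yields the almost-sure term $\int_{1-\epsilon}^1 J_{\ub}\,d\gamma = \epsilon J_{\ub}$. For the remaining piece, the substitution $u = \gamma + \epsilon$ converts it into $\int_{\tau+\epsilon}^1 J_{(\lceil nu\rceil)}\,du$. The integrand is a step function equal to $J_{(i)}$ on $((i-1)/n,\, i/n]$. By the definition of $k$, we have $(k-1)/n < \tau+\epsilon \le k/n$, so the first sub-interval has length $k/n - \tau - \epsilon$ and contributes $(k/n - \tau - \epsilon)J_{(k)}$; each subsequent sub-interval for $i=k+1,\ldots,n$ has length $1/n$ and contributes $J_{(i)}/n$. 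Summing gives exactly
\[
\int_\tau^1 \overline{\VaR}_\gamma\,d\gamma \;=\; \epsilon J_{\ub} + \Bigl(\tfrac{k}{n} - \tau - \epsilon\Bigr)J_{(k)} + \tfrac{1}{n}\sum_{i=k+1}^n J_{(i)},
\]
which, divided by $1-\tau$, matches $\overline{\CVaR}_\tau$.

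The main obstacles are bookkeeping rather than conceptual. Care is needed at two boundaries: when $k = n$ (so $\tau+\epsilon \in ((n-1)/n, 1]$) the summation term is vacuous and only the first two terms survive; and the step-function integration must be set up so that the half-open intervals $((i-1)/n, i/n]$ tile $[\tau+\epsilon,1]$ correctly without double-counting at the knots. I would also briefly justify why the uniform (in $\gamma$) bound on $\VaR_\gamma$ suffices: the good DKW event controls $\hat F_n - F$ simultaneously at every $y$, so a single $1-\delta$ event validates the entire integrand simultaneously, avoiding any union bound over $\gamma$. With those details in place, the inequality $\CVaR_\tau(J) \le \overline{\CVaR}_\tau$ holds on an event of probability at least $1-\delta$, which is the claim.
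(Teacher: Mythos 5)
Your proposal is correct and follows essentially the same route as the paper's proof: a one-sided DKW event giving a simultaneous upper bound on $\VaR_\gamma$ for all $\gamma$ (the paper phrases this as the $\VaR$ of a shifted lower-bound CDF $\underline{\CDF}$, which coincides with your $J_{(\lceil n(\gamma+\epsilon)\rceil)}$ formulation), followed by exact integration of the resulting staircase over $[\tau,1]$ with the same interval-length bookkeeping. No gaps.
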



Both the $\CVaR$ and $\E$ bounds above take the form of a sample average over the order statistics, excluding some proportion of the smaller order statistics defined by the DKW gap $\epsilon(\delta, n)$, and including the upper bound $J_\ub$ and the smallest effective order statistic $J_{(k)}$ with special weightings, also determined by $\epsilon(\delta, n)$. Therefore, both of these bounds can be understood as variations on the sample average that typically serves as a proxy for expected value. The great advantage of these bounds is that, unlike a sample average, they rigorously upper bound the unknown quantity ($\CVaR$ or $\E$) with a user-defined probability $1-\delta$ without knowing the underlying probability distribution of $J$.  

Notice that setting $\tau = 0$ in the $\CVaR$ bound gives the $\E$ bound, which is appealing given that this is also true of $\CVaR$ and $\E$ from their definitions in Def.~\ref{Def:CVaR} and Def.~\ref{Def:E} above. We note that the $\CVaR$ bound in Theorem~\ref{thm: cvar_bound} is mathematically equivalent to the one derived in~\cite{cvar_thomas}, but expressed in a different form and derived by different means.

\textcolor{black}{While the bounds described can be performed for any choice of $\delta$ and $\tau$, for a given number of samples $n$, as $\delta$ decreases, the actual bound values will increase as we require the bounds to hold with higher confidence. As $\tau$ increases, the actual bound values will also increase as we seek to bound a larger measure of the cost distribution (noting that $\VaR_{\tau}(J)$ and $\CVaR_{\tau}(J)$ are increasing with respect to $\tau$). Similarly, as $\delta$ decreases and/or as $\tau$ increases, the minimum number of samples needed to yield (non-vacuous) bounds grows.}

Finally, we introduce an upper bound on the probability of failure in a binary cost setting.
\begin{theorem}[Failure Probability Bound] \label{thm: feasibility_bound}
 Given $\delta \in (0,1)$ and $n$ IID Bernoulli samples $J_{1:n}$ (where $J = 1$ denotes failure) with \textcolor{black}{$k = \sum_{i=1}^n J_i$ failures}, we have the following probabilistic upper bound on the probability of failure, $q \coloneqq \Pr[J = 1]$,
    \begin{gather}
        \textcolor{black}{\overline{q} = \max \{q' \in [0,1] \mid \textup{Bin}(k;n,q') \ge \delta \},}
    \end{gather}
    which has the property
    \[
        \Pr\left[q \le \overline{q}\right] \geq 1 - \delta.
    \]
\end{theorem}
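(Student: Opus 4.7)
The plan is to recognize $\overline{q}$ as the standard one-sided Clopper--Pearson upper confidence limit for a binomial proportion and derive its validity by (i) exploiting the monotonicity of $\Bin(k; n, q')$ in $q'$ and (ii) invoking a discrete probability integral transform applied to the observed count.

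First I would observe that $K := \sum_{i=1}^n J_i \sim \text{Binomial}(n,q)$ under the true failure probability $q$, and that for any fixed $k \in \{0,\ldots,n\}$ the map $q' \mapsto \Bin(k; n, q')$ is continuous and non-increasing in $q' \in [0,1]$. Continuity is immediate since the CDF is a polynomial in $q'$; monotonicity can be shown either by direct differentiation (the derivative telescopes to $-n\binom{n-1}{k}(q')^k(1-q')^{n-1-k}$ when $k < n$, and identically $0$ when $k = n$) or by a standard coupling showing that $\text{Binomial}(n, q')$ is stochastically increasing in $q'$. It follows that the super-level set $\{q' \in [0,1] : \Bin(k; n, q') \ge \delta\}$ is a closed interval of the form $[0, \overline{q}(k)]$ containing its supremum, so the maximum in the statement is attained. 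Crucially, for any true parameter $q$ we have the equivalence
\[
q \le \overline{q}(k) \iff \Bin(k; n, q) \ge \delta,
\]
which converts the desired coverage statement into one about the binomial CDF evaluated at the true parameter and the random observed count $K$.

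Second, I would apply the discrete probability integral transform to the CDF $F(\cdot) := \Bin(\cdot; n, q)$ of $K$: it suffices to show $\Pr[F(K) < \delta] \le \delta$. Since $F$ is non-decreasing in its argument, the set $\{k : F(k) < \delta\}$ takes the form $\{0,1,\ldots,k^\ast\}$ for some integer $k^\ast$ (or is empty, in which case the probability is $0$ and we are done). Then
\[
\Pr[F(K) < \delta] = \Pr[K \le k^\ast] = F(k^\ast) < \delta,
\]
and combining with the equivalence from the previous step yields $\Pr[q \le \overline{q}] = \Pr[F(K) \ge \delta] \ge 1 - \delta$, as required. The main obstacle is the monotonicity-plus-continuity step, since it is what lets us swap between an implicitly defined threshold $\overline{q}$ and a clean inequality on the binomial CDF; the integral transform step is short, but the resulting slack (the inequality is strict except in trivial cases) is unavoidable for discrete distributions and matches the well-known conservatism of the Clopper--Pearson interval relative to randomized alternatives such as the Stevens/Eudey methods mentioned in the related work.
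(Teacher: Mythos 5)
Your proof is correct and takes essentially the same route as the paper's: both convert the event $q \le \overline{q}$ into the deterministic statement $\Bin(k;n,q) \ge \delta$ via monotonicity of the binomial CDF in its parameter, and then bound $\Pr[\Bin(K;n,q) < \delta]$ by $\delta$ by observing that this event is a lower set of counts $\{0,\ldots,k^*\}$ whose probability is itself a binomial CDF value that is less than $\delta$ by construction. Your write-up is merely more explicit about the continuity, monotonicity, and attainment of the maximum defining $\overline{q}$, which the paper leaves implicit.
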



\subsection{Constraint Satisfaction Tests}
The above theorems bound performance measures on the random cost $J$ attained by a robotic system under a given control policy. Now we adapt the above bounds to give a test for constraint satisfaction.
Consider any of the performance measures above applied to the constraint function, $\P(g)$, embedded in a constraint
\[
 \P(g) \le C.
\]
Using the associated finite-sample bound, which we write generically as $\overline{P}$, we define the associated constraint test as $\overline{P} \le C$. We have the following result.
\begin{theorem}[Constraint Test]
\label{Thm:ConstraintTest}
 The test for constraint satisfaction $\overline{P} \le C$ has a false acceptance rate of no more than $\delta$, that is,
 \[
 \Pr[\overline{P} \le C \mid \P(g) > C] \le \delta.
 \]
\end{theorem}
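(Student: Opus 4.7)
The plan is to unpack the conditional probability by noting that $\mathcal{P}(g)$ is a deterministic (though unknown) quantity, while $\overline{P}$ is the random object built from the finite sample of constraint-function evaluations. Thus the conditioning event $\{\mathcal{P}(g) > C\}$ either has probability zero (in which case the conditional claim is vacuous) or probability one, in which case the statement reduces to the unconditional bound
\[
\Pr[\overline{P} \le C] \le \delta.
\]
I would start the proof with this reduction to avoid any measure-theoretic fuss.

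Next I would chain two inequalities. First, under the assumption $\mathcal{P}(g) > C$, the event $\{\overline{P} \le C\}$ is contained in $\{\overline{P} < \mathcal{P}(g)\}$, because $\overline{P} \le C < \mathcal{P}(g)$. Second, each of the four bounds $\overline{P}$ in Theorems~\ref{thm: var_bound}--\ref{thm: feasibility_bound}, when applied to the IID samples of $g$ rather than $J$, enjoys exactly the same validity guarantee
\[
\Pr[\mathcal{P}(g) \le \overline{P}] \ge 1-\delta,
\]
since none of those proofs used anything about $J$ other than that it is a scalar random variable with IID samples (and, for the $\mathbb{E}$ and $\mathrm{CVaR}$ bounds, an almost sure upper bound, which also holds for $g$). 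Taking complements yields $\Pr[\overline{P} < \mathcal{P}(g)] \le \delta$.

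Combining these two observations gives
\[
\Pr[\overline{P} \le C \mid \mathcal{P}(g) > C] = \Pr[\overline{P} \le C] \le \Pr[\overline{P} < \mathcal{P}(g)] \le \delta,
\]
which is exactly the claim. I would then make a brief remark that the argument is bound-agnostic: any finite-sample upper bound with coverage at least $1-\delta$ yields a constraint test with false acceptance rate at most $\delta$ via the same three lines, so in particular this holds simultaneously for the four instantiations of $\overline{P}$ corresponding to $\mathbb{E}$, $\mathrm{VaR}_\tau$, $\mathrm{CVaR}_\tau$, and failure probability $q$.

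There is no real obstacle here; the only thing to be careful about is making clear that the randomness lives entirely in $\overline{P}$ (through the sampled rollouts) and that $\mathcal{P}(g)$ is a fixed distributional functional, so the conditional probability is well-defined even though the conditioning event is deterministic. This observation is what makes the conditional-to-unconditional reduction legitimate and turns the test guarantee into a direct corollary of the coverage properties already established.
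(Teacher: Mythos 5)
Your proof is correct and follows essentially the same argument as the paper's: the event $\{\overline{P} \le C\}$ is contained in $\{\overline{P} < \P(g)\}$ when $\P(g) > C$, and the coverage guarantee $\Pr[\P(g) \le \overline{P}] \ge 1-\delta$ then bounds the false acceptance rate by $\delta$. Your additional observation that the conditioning event is deterministic (so the conditional probability reduces to an unconditional one) is a small clarification the paper leaves implicit, but the substance is identical.
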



The ability to provide a false acceptance guarantee for the constraint tests is a powerful yet natural consequence of the bound error rates. Indeed, we could not readily guarantee a false acceptance rate if using Monte Carlo estimates for the performance measures.

We noted in Section \ref{sec:Problem} that chance constraints:
    \[
    \Pr[g \leq 0] \geq \tau
    \]
can be modeled using either a binary function for $g \leq 0$ or by constraining $\VaR_{\tau}(g)$. Using this equivalence, we can test whether a chance constraint holds via two equivalent methods: 
\begin{itemize}
    \item By forming $\overline{\VaR}_{\tau} = g_{(k)}$ using Theorem \ref{thm: var_bound} and checking whether $g_{(k)} \leq 0$,
    \item By first computing the number of successes/failures to satisfy $g(\X, \U) \leq 0$ to form $\overline{q}$ using Theorem \ref{thm: feasibility_bound} and checking whether $\overline{q} \leq 1 - \tau$.
\end{itemize} 
Both methods also require the same number of minimum samples $n$ to ever accept: $\Bin(n-1;n,\tau) \geq 1-\delta$, although this condition is enforced directly in Theorem \ref{thm: var_bound}.

\textcolor{black}{It may be the case that one wishes to assess a policy's safety, via a constraint satisfaction test $\P(g) \leq C$, and if safe, obtain a bound on the policy's cost performance $\P(J)$. To disentangle these assessments, we would first repeatedly rollout the policy recording $g_i$ to assess $\P(g) \leq C$. If the policy passes the safety test i.e., $\overline{P} \leq C$, we then separately rollout the policy recording $J_i$ to bound $\P(J)$.}

\section{Bound Evaluation Experiments} \label{sec: plan_eval_exp}
\subsection{Performance Bounds}
In Fig.~\ref{fig:valid_stat} we empirically validate the bounds given by Theorems~\ref{thm: var_bound}, \ref{thm: exp_bound}, \ref{thm: cvar_bound}, \ref{thm: feasibility_bound} (subfigures \ref{fig: var_bound}, \ref{fig: exp_bound}, \ref{fig: cvar_bound}, \ref{fig: pr_bound} respectively). We use the error rate $\delta = 0.2$ for all bounds, and the quantile $\tau = 0.7$ for $\VaR_\tau$ and $\CVaR_\tau$. For a fixed policy $\mathcal{U}$, each plot shows the resulting distribution of total cost \textcolor{black}{as a blue histogram generated using $10,000$ simulation rollouts.} \textcolor{black}{Since the bounds are sample-based, they themselves follow a distribution, shown as the overlaid gray histogram. To compute this bound distribution, we repeatedly ($1000$ times) generate the sample-based bound using a fresh batch of $n = 100$ sampled policy rollouts.} The blue dashed vertical line shows the true performance measure \textcolor{black}{we seek to bound},\footnote{Since we do not have access to the true underlying performance measure, we approximate the true measure with a Monte Carlo estimate from the 10,000 simulation rollouts. This is only for visualization purposes.} while the gray dashed vertical line shows the \textcolor{black}{$\delta$ ($0.2$ in this case) quantile of the bound distribution. \footnote{This theoretical quantile is also approximated for visualization as the empirical quantile of the repeated bound generations.} Since the bounds are sample-based, an individual generated bound may be invalid and fall below the true performance measure. We observe this in subfigures~\ref{fig: var_bound} and~\ref{fig: pr_bound} where portions of the gray bound histogram lie left of the blue dashed line. Yet, the theorems guarantee that the bounds hold with probability at least $1 - \delta$ so at least $1 - \delta$ of the bound distribution should exceed the true performance measure. Equivalently, the bound distribution's $\delta$ quantile should exceed the true performance measure. Visually, this means that the gray dashed line should lie right of the blue dashed line. We indeed observe this result in all the subfigures, empirically demonstrating that the sample-based bounds hold with probability at least $1 - \delta$ as guaranteed by the theorems.
}

\begin{figure}
     \centering
     \subfloat[]{\includegraphics[width=\linewidth]{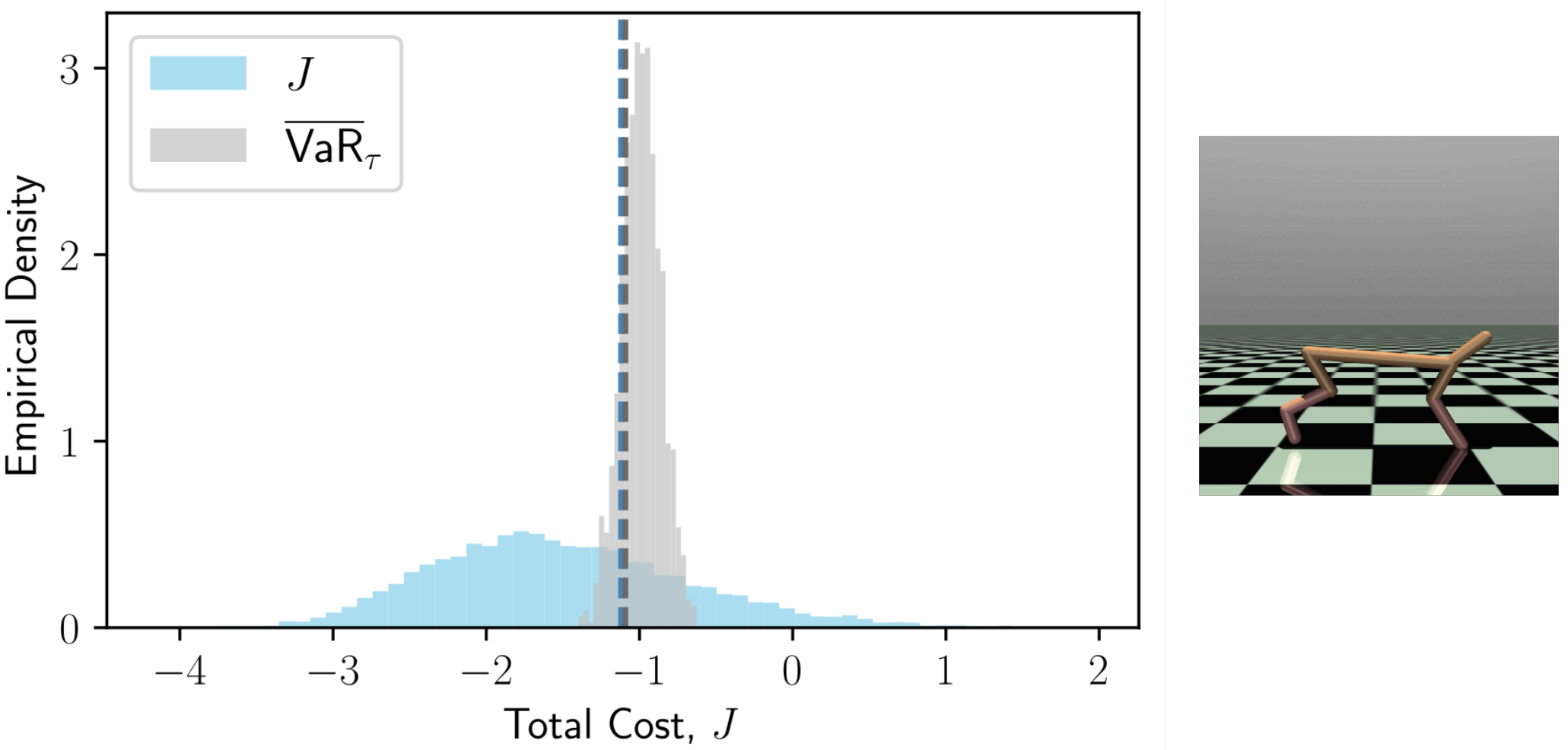}\label{fig: var_bound}}
     \hfill
     \subfloat[]{\includegraphics[width=\linewidth]{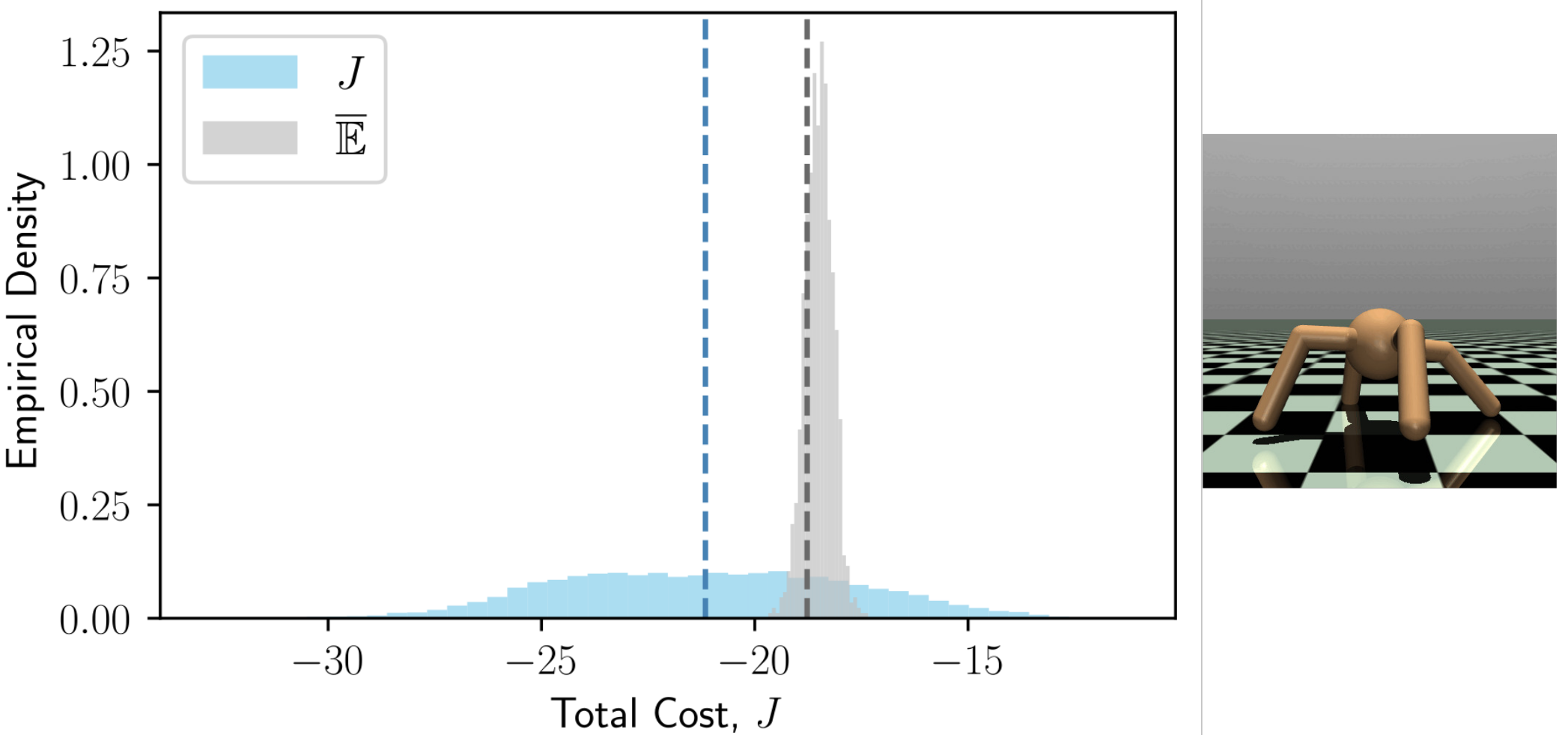}\label{fig: exp_bound}}
     \hfill
     \subfloat[]{\includegraphics[width=\linewidth]{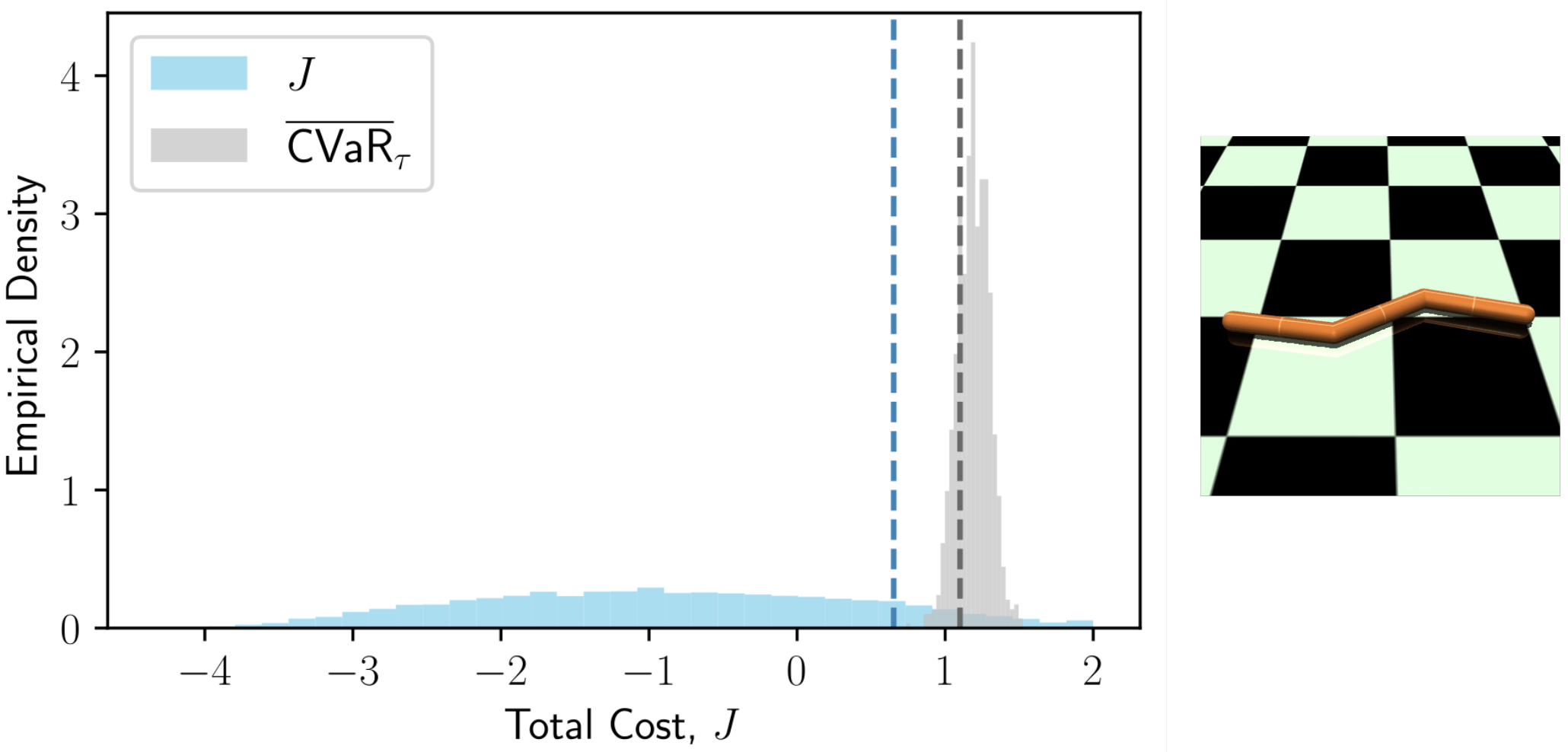}\label{fig: cvar_bound}}
     \hfill
     \subfloat[]{\includegraphics[width=\linewidth]{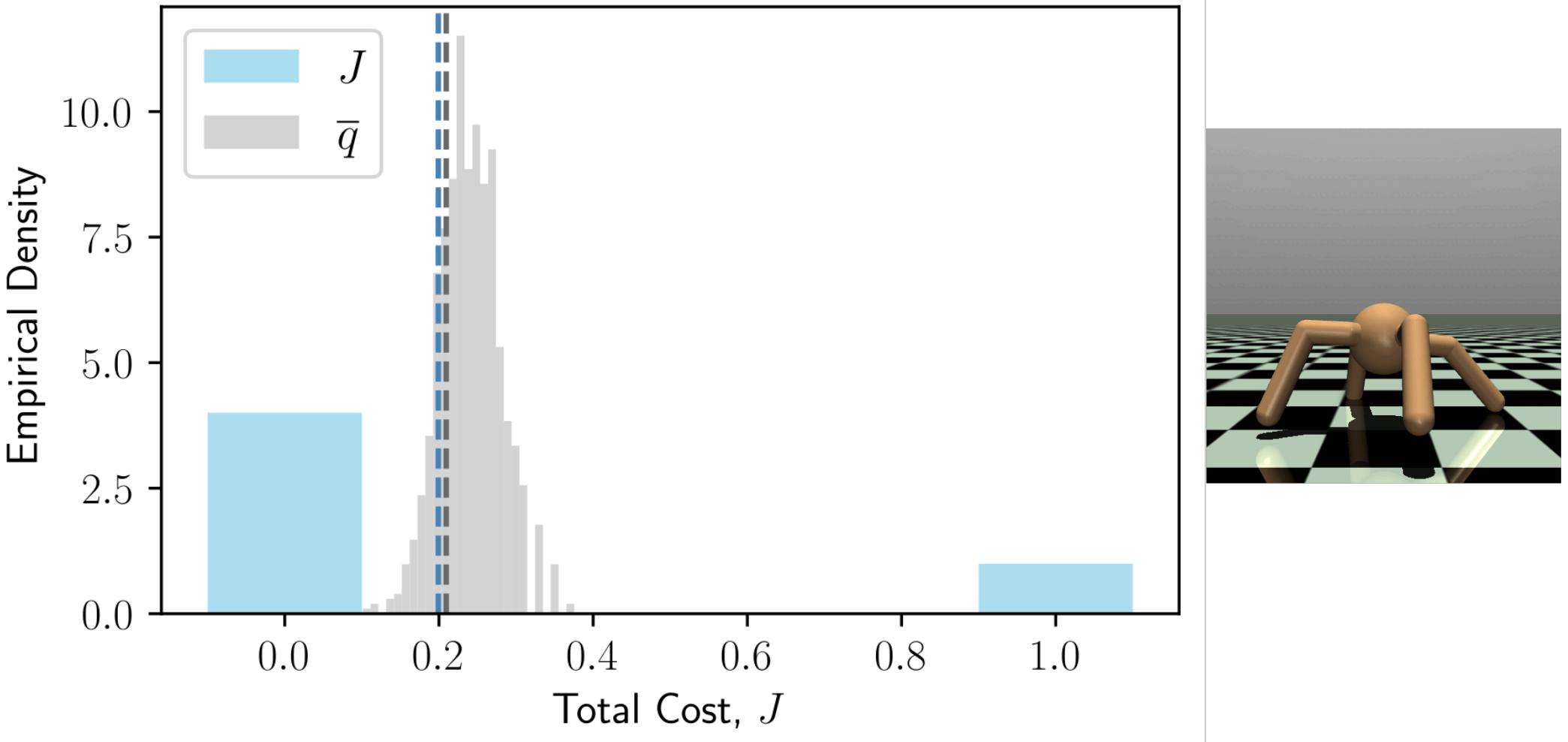}\label{fig: pr_bound}}
    \caption{Empirical validation of the bounds for $\VaR_\tau$, $\E$, $\CVaR_\tau$, and $q$ (subfigures \ref{fig: var_bound}, \ref{fig: exp_bound}, \ref{fig: cvar_bound}, \ref{fig: pr_bound}, respectively). Each plot shows, for a single policy, the empirical distribution of total cost $J$ (blue), along with the distribution of the bound (gray). The blue vertical line shows the true measure we seek to bound and the gray vertical line shows the $\delta$ quantile of the bound distribution. \textcolor{black}{Since our theoretical results ensure the bounds holds with probability $\ge1 - \delta$, the $\delta$ quantile of the bound distribution should exceed the true measure. Thus visually, our} results ensure the gray line is to the right of the blue line, as validated in each plot. \textcolor{black}{The cost} histogram was generated using $10,000$ simulations \textcolor{black}{and the bound histogram was generated by repeatedly computing the bound $1000$ separate times.} In each case $n=100$, $\delta = 0.2$, and $\tau = 0.7$. To demonstrate that the bounds are agnostic to the dynamics, we used the Half Cheetah (\ref{fig: var_bound}), Ant (\ref{fig: exp_bound}, \ref{fig: pr_bound}), and Swimmer (\ref{fig: cvar_bound}) MuJoCo environments.}
    \label{fig:valid_stat}
\end{figure}

To emphasize that the bounds are agnostic to the form of the dynamics, we use a variety of MuJoCo environments for testing the validity of Theorems \ref{thm: var_bound}, \ref{thm: exp_bound}, \ref{thm: cvar_bound}, \ref{thm: feasibility_bound} (Half Cheetah~\cite{half_cheetah}, Ant~\cite{ant}, Swimmer~\cite{swimmer}, and Ant again respectively). Due to limitations of the MuJoCo simulator, in these experiments, the dynamics of the robot are deterministic. Uncertainty comes from the starting state of the robot, which is randomly initialized using a Gaussian distribution centered about a nominal state. This form of uncertainty could realistically arise when a state estimation algorithm (e.g., Kalman filter) is used to provide a distribution over the starting state. The cost functions used are the negative values of the default rewards in MuJoCo, which encourage forward motion and minimal control input. We use clipping of the stage cost to ensure bounded support when computing the expectation and CVaR bounds. For the sparse cost case in subfigure \ref{fig: pr_bound}, we declared success, setting $J = 0$, when the Ant torso remained within the standard height range considered by default in MuJoCo, but still used the continuous cost when optimizing. The open-loop policies considered are obtained as the result of optimizing with the cross-entropy method (CEM)~\cite{mannor2003cem}. Thus, the bounds can be viewed as a \textcolor{black}{probabilistic} guarantee on the optimizer's solution performance under randomized initial conditions. Further experiment details are in the Appendix.

While the expectation and CVaR bound (Theorems~\ref{thm: exp_bound}, \ref{thm: cvar_bound}) are somewhat loose when \textcolor{black}{formed} using the relatively small number of 100 samples, the VaR and probability of failure bounds (Theorems~\ref{thm: var_bound}, \ref{thm: feasibility_bound}) are quite tight. In our later experiments, we focus on showing results using VaR \textcolor{black}{and failure probability} as the relevant statistics. 

\textcolor{black}{In the \textit{Bound Comparison} section of the Appendix we compare the bound distributions (gray) we obtain in Figure~\ref{fig:valid_stat} with the bound distributions one would obtain using different bounds from the literature (specifically those from~\cite{caltech_policy_synth,risk_verification,cvar_kolla}). We show that the bounds we use are less conservative than others (better estimating the unknown performance measure). Using less conservative bounds allows more accurate understanding of policy performance, so practitioners can better decide whether to deploy a policy or devote more resources towards policy synthesis/improvement.}

\subsection{Constraint Satisfaction Tests}
In Fig.~\ref{fig:valid_chance}, using our approach from Theorem \ref{Thm:ConstraintTest}, we show the relationship between probability of our constraint test holding and the probability that the underlying constraint is actually satisfied. To convey the effect of sample size on the test we show theoretical curves for $n \in \{10, 50, 100, 500\}$. We empirically validate the theory for the $n = 10$ case using simulations of the MuJoCo Ant environment. Specifically, the constraint function $g = 0$ is a success if the height of the ant torso remains in the interval [0.5, 1] for an entire rollout (based on the healthy condition specified in MuJoCo), and a failure otherwise. We impose a binary failure probability constraint (\ref{eq: g_p}) requiring that this condition is satisfied with probability at least $\tau = 0.7$ (shown by the gray dotted vertical line in Fig.~\ref{fig:valid_chance}). In other words, we have reformulated a chance constraint as a constraint on the failure probability of a binary $g$. We seek to verify whether or not the provided control policy satisfies this constraint by using the test derived from Theorem~\ref{thm: feasibility_bound}, checking whether the upper bound on the failure probability is sufficiently low $\overline{q} \leq 1 - \tau$. By constructing $\overline{q}$ with user-specified error rate of $\delta = 0.2$ we are guaranteed to have a false acceptance rate no greater than $\delta = 0.2$ (shown by the dashed gray horizonal line in Fig.~\ref{fig:valid_chance}) by Theorem \ref{Thm:ConstraintTest}. To validate the test over a range of satisfying and violating policies, we use 20 open-loop policies generated by randomly sampling control actions. 

For each policy, we repeatedly (1000 times) collect a fresh set of $n = 10$ samples of the constraint function $g_{1:n}$ by executing the control actions from random initial conditions in the Ant environment, and apply Theorem \ref{Thm:ConstraintTest} with cutoff $C = 1 - \tau$ and bound $\overline{P} = \overline{q}$ computed from Theorem~\ref{thm: feasibility_bound}. We call each such bound computation a trial. We use the empirical fraction of trials for which we concluded the chance constraint holds (based on the test) to approximate the unknown true probability. This fraction provides the y-coordinate for the associated point in the figure. We obtain the associated x-coordinate, the ``ground truth" constraint satisfaction probability $\Pr[g = 0]$, through 1000 Monte Carlo simulations. Specifically, we simulated the policy 1000 times and recorded the empirical fraction of the resulting trajectories that were a success, obtaining $g = 0$.

The tight agreement between the theoretical and empirical results show that we can use our method to provide a sampling-based certification method for policy constraint satisfaction. In particular, we observe that both in the empirical and theoretical curves whenever the constraint fails to hold, the acceptance probability is below $\delta$: to the left of the vertical line at $\tau = 0.7$ all curves lie below the horizontal line at $\delta = 0.2$, avoiding the region shaded in red. 

Similar figures could have been generated for constraint tests on other performance measures with continuous $g$ e.g., a CVaR constraint. However, we would not easily be able to generate a corresponding theoretical curve. With Theorem~\ref{thm: feasibility_bound}, we can compute the theoretical curves using the Binomial distribution and the true probability of success.

\begin{figure}
    \centering
    \includegraphics[width=\linewidth]{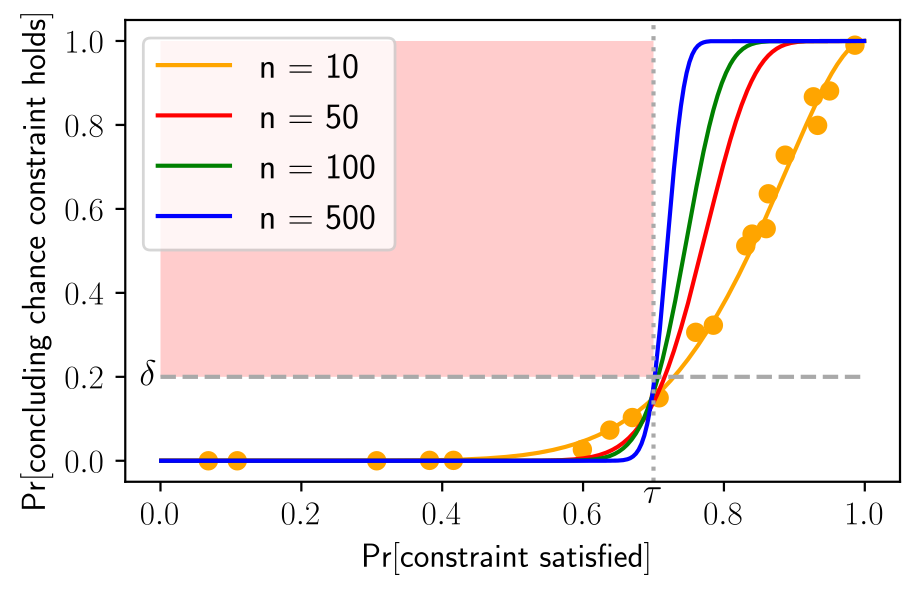}
    \caption{Visualization of Theorem \ref{Thm:ConstraintTest}, and empirical validation of the theorem, applied to testing whether a chance constraint holds. Each curve represents the probability of accepting that the chance constraint holds (y-axis) given the true probability of the underlying trajectory constraint being satisfied (x-axis). The validity of the theorem is demonstrated by each curve being below $\delta$ when the chance constraint fails to hold i.e., $\Pr[\textup{constraint satisfied}]$ is below $\tau$. Visually, the false acceptance is guaranteed to be below $\delta$ so that the curves avoid the region shaded in red in the figure. Here we use $\delta=0.2$ (horizontal line), $\tau=0.7$ (vertical line). Furthermore, as the sample size ($n$) increases, the curve approaches a step function, i.e. we obtain a perfect discriminator. In addition, for $n=10$ we plot empirical results from the Ant environment where the vertical position of the Ant torso always being between $[0.5, 1]$ with probability $0.7$ is the chance constraint we seek to assess. The $x$ and $y$ coordinates for each orange dot are separately estimated using an average taken over 1000 simulation runs.}
    \label{fig:valid_chance}
\end{figure}


\section{\textcolor{black}{Bound Sensitivity to Distribution Shifts}}
\label{sec: bound_sensitivity}
\textcolor{black}{
In this section we give analytical expressions for the effect that changes in cost distributions have on the confidence level of the bounds presented in Section \ref{sec: plan_evaluation}. The setting we consider is when the  distribution of cost based on our simulator does not match the true distribution of cost when the policy is deployed in the real world. Using samples from the simulator cost distribution, we construct performance bounds with confidence level $1-\delta_{sim}$. In the following subsections we give expressions for how this confidence level changes (to $1 - \delta_{true}$) when the distribution of cost in the real world doesn't match that of the simulator. We present the sensitivity relationships in this section as corollaries of the bound theorems in Section \ref{sec: plan_evaluation}. All proofs are deferred to the Appendix.
}

\textcolor{black}{
It is important to note the inherent trade-off between efficiency of a bound and its robustness to distribution shift. If a bound very precisely estimates the unknown parameter, it will be more sensitive to distribution shifts. If one anticipates a certain level of distribution shift, it is not appropriate to use less precise bounds, instead, in Section~\ref{sec: robust_bounds} we detail how to modify the bounds presented in this paper to be appropriately robust to distribution shift. 
}

\textcolor{black}{In this paper we measure distribution shift between the simulated cost distribution $\mathcal{D}_{sim}$ and the true cost distribution $\mathcal{D}_{true}$ according to the one-sided Kolmogorov-Smirnov (KS) distance for distributions~\cite{lehmann_textbook},
\begin{align}
    \sup_x \ \text{CDF}_{\mathcal{D}_{sim}}(x) - \text{CDF}_{\mathcal{D}_{true}}(x).
\end{align}
We consider the one-sided KS distance because it captures when a cost CDF shifts downwards, relating to a harmful distribution shift of higher cost more often.}

\begin{corollary}[\textcolor{black}{Sensitivity of VaR Bounds}] \label{cor: sens_var}
    \textcolor{black}{
Suppose we construct $\overline{\textup{VaR}}_\tau$ with samples from the simulated cost distribution, $J_{sim} \sim \mathcal{D}_{sim}$. Then suppose that that the true cost distribution, $\mathcal{D}_{true}$, is close to $\mathcal{D}_{sim}$ in the one-sided KS distance,
\begin{align}
    \sup_x \ \CDF_{\mathcal{D}_{sim}}(x) - \CDF_{\mathcal{D}_{true}}(x) \le \alpha.
\end{align}
Then we have,
\begin{subequations} \label{eq:var_sensitivty}
    \begin{gather}
    \Pr[\textup{VaR}_{\tau}(J_{true}) \le \overline{\textup{VaR}}_\tau] \ge 1-\delta_{true}, \\
    \delta_{true} = 1 - \textup{Bin}(k^*-1;n,\tau+\alpha), \label{eq:tau'_bound}\\
    k^* = \min \{k \in \{1,\ldots,n\} \mid  \Bin(k-1;n,\tau) \ge 1 - \delta_{sim}\}.
    \end{gather}
\end{subequations}
}
\end{corollary}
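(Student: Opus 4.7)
The plan is to reduce the statement to an application of Theorem~\ref{thm: var_bound} by translating the one-sided KS gap between $\mathcal{D}_{sim}$ and $\mathcal{D}_{true}$ into a shift in the quantile being controlled. Since $\overline{\VaR}_\tau$ is simply the $k^*$-th order statistic of the simulated samples $J_{sim,1:n}$, the real question becomes: how well does this particular order statistic of samples from $\mathcal{D}_{sim}$ upper-bound the $\tau$-quantile of $\mathcal{D}_{true}$?

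First I would establish the deterministic, sample-free inequality $\VaR_\tau(J_{true}) \le \VaR_{\tau+\alpha}(J_{sim})$ directly from the KS hypothesis. For any $y$ with $\CDF_{\mathcal{D}_{sim}}(y) \ge \tau + \alpha$, the hypothesis gives $\CDF_{\mathcal{D}_{true}}(y) \ge \CDF_{\mathcal{D}_{sim}}(y) - \alpha \ge \tau$, so $y$ lies in the set $\{y : \CDF_{\mathcal{D}_{true}}(y) \ge \tau\}$ whose infimum defines $\VaR_\tau(J_{true})$. Taking the infimum over all such $y$ gives the claim, since the set on the right is exactly the one defining $\VaR_{\tau+\alpha}(J_{sim})$.

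Next I would invoke the underlying probabilistic content of Theorem~\ref{thm: var_bound} for the simulator samples at the shifted quantile $\tau + \alpha$, applied with the same order-statistic index $k^*$. Although Theorem~\ref{thm: var_bound} is stated for the minimal feasible $k$, its proof in fact establishes the more general classical fact that for any fixed $k$ and any $\tau' \in (0,1)$, $\Pr[\VaR_{\tau'}(J_{sim}) \le J_{sim,(k)}] \ge \Bin(k-1;n,\tau')$. Instantiating with $\tau' = \tau + \alpha$ and $k = k^*$ yields $\Pr[\VaR_{\tau+\alpha}(J_{sim}) \le \overline{\VaR}_\tau] \ge \Bin(k^*-1;n,\tau+\alpha)$. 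Chaining this with the deterministic inequality from the first step produces $\Pr[\VaR_\tau(J_{true}) \le \overline{\VaR}_\tau] \ge \Bin(k^*-1;n,\tau+\alpha)$, and passing to the complement yields exactly the advertised $\delta_{true} = 1 - \Bin(k^*-1;n,\tau+\alpha)$.

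The main obstacle here is not technical depth but edge-case bookkeeping. In particular, one must handle $\tau + \alpha \ge 1$, in which case the shifted simulator VaR may not be finite; the advertised formula remains consistent because $\Bin(k^*-1;n,1) = \mathbbm{1}[k^* > n]$, so $\delta_{true} = 1$ and the guarantee becomes vacuous rather than incorrect. I would also emphasize that no IID assumption on the deployment distribution is needed: the first step is a deterministic distribution-wise comparison, and the only randomness used in the second step comes from the simulator samples, exactly as in Theorem~\ref{thm: var_bound}.
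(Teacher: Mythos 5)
Your proposal is correct and follows essentially the same route as the paper's proof: first the deterministic quantile-shift inequality $\VaR_\tau(J_{true}) \le \VaR_{\tau+\alpha}(J_{sim})$, then the order-statistic coverage bound $\Pr[\VaR_{\tau+\alpha}(J_{sim}) \le J_{(k^*)}] \ge \Bin(k^*-1;n,\tau+\alpha)$ from the proof of Theorem~\ref{thm: var_bound}, chained together. You actually supply two details the paper leaves implicit (the derivation of the deterministic inequality from the one-sided KS hypothesis, and the vacuous-but-consistent behavior when $\tau+\alpha \ge 1$), but the argument is the same.
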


\textcolor{black}{In Figure \ref{fig:var_sensitivity} we plot the sensitivity of the VaR bound to distribution shifts imposed by misspecifying a parameter for the Half Cheetah environment, the noise parameter $\sigma$ over the initial state distribution. The empirical confidence level of the bound $\Pr[\VaR_{\tau}(J_{true}) \leq \overline{\VaR}_{\tau}]$ always exceeds the theoretically predicted $1 - \delta_{true}$ using Corollary \ref{cor: sens_var}. However, the theoretical prediction is pessimistic when $\sigma$ is decreased from the nominal value $\sigma_{sim}$ as this distribution shift actually results in increased confidence level. A tighter theoretical bound may be obtained if we assume knowledge of the precise $\tau'$ such that $\VaR_{\tau}(J_{true}) = \VaR_{\tau'}(J_{sim})$ and then use $\tau'$ instead of the larger $\tau + \alpha$ in Eq.~\ref{eq:tau'_bound}}.

\begin{figure}
    \centering
    \includegraphics[width=\linewidth]{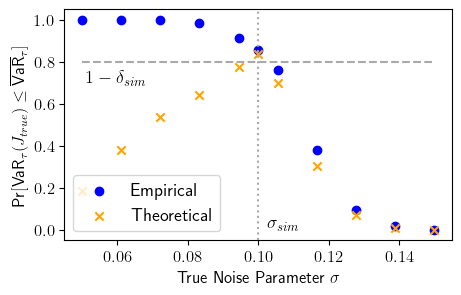}
    \caption{\textcolor{black}{Confidence level of the VaR bound as a sim-to-real mismatch is varied. The parameter $\sigma$ controls the standard deviation of the initial state distribution in the Half Cheetah environment. When $\sigma > \sigma_{sim}$, the true confidence level of the bound degrades. When $\sigma < \sigma_{sim}$, the true confidence level of the bound strengthens. In blue we plot the empirical confidence levels estimated by varying $\sigma$, using $10,000$ simulations to estimate $\textup{VaR}_{\tau}(J_{true})$, and using $1000$ realizations of $\overline{\textup{VaR}}_{\tau}$. In orange we plot the minimum confidence level guaranteed by Equation \ref{eq:var_sensitivty}. The theoretical sensitivity guarantee is valid, always lower than the empirical confidence, but is pessimistic when $\sigma < \sigma_{sim}$ as even though the one-sided KS distance $\alpha > 0$ the distribution shift actually results in a higher confidence level.}}
    \label{fig:var_sensitivity}
\end{figure}


\begin{corollary}[\textcolor{black}{Sensitivity of $\mathbb{E}$ and CVaR Bounds}] \label{cor: sens_cvar}
    \textcolor{black}{
Suppose we construct $\overline{\textup{CVaR}}_\tau$ with samples from the simulated cost distribution, $J_{sim} \sim \mathcal{D}_{sim}$. Suppose that the true cost distribution, $\mathcal{D}_{true}$, is close to $\mathcal{D}_{sim}$ in the one-sided KS distance,
\begin{align}
    \sup_x \ \CDF_{\mathcal{D}_{sim}}(x) - \CDF_{\mathcal{D}_{true}}(x) \le \alpha \le \sqrt{\frac{-\ln(2\delta_{sim})}{2n} }.
\end{align}
Then we have,
\begin{subequations}
    \begin{gather}
    \Pr[\textup{CVaR}_\tau(J_{true}) \le \overline{\textup{CVaR}}_\tau] \ge 1-\delta_{true}, \\
    \delta_{true} = e^{-2n(\epsilon - \alpha)^2}, \\
    \epsilon = \sqrt{\frac{-\ln \delta_{sim}}{2n}}.
\end{gather} \label{eq: cvar_sensitivity}
\end{subequations}
}
\end{corollary}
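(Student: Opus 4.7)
The plan is to reduce the corollary to the proof of Theorem~\ref{thm: cvar_bound} by absorbing the one-sided KS distance into an effective DKW gap. Recall that the CVaR bound in Theorem~\ref{thm: cvar_bound} (and the expected value bound of Theorem~\ref{thm: exp_bound}, which is the special case $\tau = 0$) is derived from the one-sided DKW inequality: with probability at least $1 - \delta_{sim}$, the empirical CDF $F_{n,sim}$ built from $n$ IID samples of $J_{sim}$ satisfies $\sup_x (F_{n,sim}(x) - F_{sim}(x)) \le \epsilon$, and on this ``good event'' the order-statistic formula $\overline{\CVaR}_\tau$ upper bounds $\CVaR_\tau(F_{sim})$.

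The first step is to transfer this control from $F_{sim}$ to $F_{true}$. The one-sided KS hypothesis $\sup_x (F_{sim}(x) - F_{true}(x)) \le \alpha$ holds deterministically, so $F_{true}(x) \ge F_{sim}(x) - \alpha$ for every $x$. Rather than chaining this with the original $\epsilon$-event to obtain a looser gap of $\epsilon + \alpha$ (which would weaken the bound), I would reparameterize by shrinking the DKW target to $\epsilon' := \epsilon - \alpha \ge 0$. Define the event $E_\alpha := \{\sup_x (F_{n,sim}(x) - F_{sim}(x)) \le \epsilon'\}$. On $E_\alpha$, chaining the two inequalities yields $\sup_x (F_{n,sim}(x) - F_{true}(x)) \le \epsilon' + \alpha = \epsilon$, which is precisely the DKW-type inequality required by the proof of Theorem~\ref{thm: cvar_bound} when applied to $F_{true}$.

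Next, I would invoke the original argument verbatim: since the subsequent manipulations only use the inequality $\sup_x (F_n - F) \le \epsilon$ with the original gap $\epsilon$ and are otherwise agnostic to which distribution $F$ belongs to, the conclusion $\overline{\CVaR}_\tau \ge \CVaR_\tau(F_{true})$ holds on $E_\alpha$. Applying the one-sided DKW inequality to $F_{sim}$ at the reduced level $\epsilon'$ then gives $\Pr[E_\alpha] \ge 1 - e^{-2n(\epsilon - \alpha)^2}$, so that $\delta_{true} = e^{-2n(\epsilon - \alpha)^2}$. The sensitivity result for the expected value bound follows as the $\tau = 0$ specialization, since $\E[J] = \CVaR_0(J)$ and $\overline{\E} = \overline{\CVaR}_0$.

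The main technical obstacle is verifying the modularity claim, namely that the proof of Theorem~\ref{thm: cvar_bound} really does factor cleanly through the DKW event, so that substituting $F_{true}$ for $F_{sim}$ leaves the rest of the argument unchanged. A secondary subtlety is the feasibility constraint on $\alpha$: the stated hypothesis $\alpha \le \sqrt{-\ln(2\delta_{sim})/(2n)}$ is strictly stronger than the naive requirement $\epsilon' \ge 0$ produced by the above sketch, and I would trace the original proof to locate the extra factor of two, which I suspect enters through a union bound combining the DKW step with a separate tail control used in the derivation of Theorem~\ref{thm: cvar_bound}.
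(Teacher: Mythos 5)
Your argument is correct and is essentially the paper's own proof: both absorb the deterministic one-sided KS shift $\alpha$ into the DKW gap by requiring the empirical CDF to concentrate at the reduced level $\epsilon' = \epsilon - \alpha$, so that the original gap $\epsilon$ holds relative to $\CDF_{\mathcal{D}_{true}}$, the rest of the derivation of Theorem~\ref{thm: cvar_bound} goes through unchanged, and DKW at level $\epsilon'$ yields $\delta_{true} = e^{-2n(\epsilon-\alpha)^2}$ (with $\tau=0$ giving the expected-value case). One small correction to your closing remark: the factor of two in the feasibility condition on $\alpha$ does not come from a union bound --- the paper attributes it to the DKW inequality being valid only for error levels in $(0,0.5]$, so the reduced-level application must still keep $\delta_{true}$ in that range.
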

\textcolor{black}{The condition that $\alpha \le \sqrt{\frac{-\ln(2\delta_{sim})}{2n} }$ is an artifact of the DKW bound holding for $\delta \in (0,0.5]$. One can remove this condition (at the expense of closed-form expressions for the bounds) by using the Kolmogorov-Smirnov approach~\cite{birnbaum1951} in place of the DKW approach when constructing the bounds. Lastly, since we treat expected value as a special case of CVaR, the relationships in \ref{eq: cvar_sensitivity} also hold for the expected value bound.}

\begin{corollary}[\textcolor{black}{Sensitivity of Failure Probability Bounds}] \label{cor: sens_fail_prob}
    \textcolor{black}{
Suppose we construct $\overline{q}$ by observing $k$ failures out of $n$ samples from a Bernoulli distribution with probability of failure $q_{sim}$. Then suppose that that the true cost distribution, $\mathcal{D}_{true}$, is close to $\mathcal{D}_{sim}$ in the one-sided KS distance,
\begin{align}
    \sup_x \ \CDF_{\mathcal{D}_{sim}}(x) - \CDF_{\mathcal{D}_{true}}(x) \le \alpha,
\end{align}
i.e. $q_{true} - q_{sim} \le \alpha$. Then we have,
\begin{subequations}
    \begin{gather}
    \Pr[q_{true} \le \overline{q}] \ge 1-\delta_{true}, \\
    \delta_{true} = \textup{Bin}(k_{\alpha}^*-1;n,q_{sim}), \\
    k^*_{\alpha} = \min \{k \in \{0,\ldots,n\} \mid \textup{Bin}(k;n,q_{sim}+\alpha) \ge \delta_{sim} \}.
\end{gather}
\end{subequations}
}
\end{corollary}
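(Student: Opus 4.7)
The plan is to leverage two facts: the monotonicity of the bound $\overline{q}$ in the observed failure count $k$, and the observation that the one-sided KS condition reduces for Bernoulli distributions to $q_{true} \le q_{sim} + \alpha$. Concretely, I would first argue that the failure event is contained in a translated event, then translate that event into a tail condition on the binomial count, and finally invoke the fact that under the simulator $k \sim \textup{Bin}(n, q_{sim})$.

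\textbf{Step 1 (reduce to a shifted event).} Since $\textup{CDF}_{sim}$ and $\textup{CDF}_{true}$ for Bernoulli variables agree outside $[0,1)$ and equal $1-q_{sim}$ and $1-q_{true}$ respectively on $[0,1)$, the one-sided KS hypothesis is equivalent to $q_{true} - q_{sim} \le \alpha$. Therefore the inclusion $\{\overline{q} < q_{true}\} \subseteq \{\overline{q} < q_{sim} + \alpha\}$ holds, and it suffices to upper bound the probability of the latter.

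\textbf{Step 2 (translate to a condition on $k$).} I would use that $\textup{Bin}(k;n,q)$ is strictly decreasing in $q$ for fixed $k < n$ and non-decreasing in $k$ for fixed $q$. The first monotonicity gives the equivalence
\[
\overline{q}(k) \ge q_{sim} + \alpha \iff \textup{Bin}(k;n,q_{sim}+\alpha) \ge \delta_{sim},
\]
using the definition $\overline{q}(k) = \max\{q' \in [0,1] \mid \textup{Bin}(k;n,q') \ge \delta_{sim}\}$ from Theorem~\ref{thm: feasibility_bound}. The second monotonicity shows the set of $k$ satisfying the right-hand inequality is an upper set, equal to $\{k_\alpha^*, k_\alpha^* + 1, \ldots, n\}$ by definition of $k_\alpha^*$. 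Hence $\{\overline{q} < q_{sim} + \alpha\} = \{k < k_\alpha^*\}$.

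\textbf{Step 3 (evaluate the binomial tail).} Under Assumption~\ref{Ass:IID} applied to the simulator, $k \sim \textup{Bin}(n, q_{sim})$, so $\Pr[k < k_\alpha^*] = \textup{Bin}(k_\alpha^* - 1; n, q_{sim})$ (with the convention $\textup{Bin}(-1;n,\cdot) = 0$ covering $k_\alpha^* = 0$). Combining Steps 1--3 yields $\Pr[q_{true} \le \overline{q}] \ge 1 - \textup{Bin}(k_\alpha^* - 1; n, q_{sim}) = 1 - \delta_{true}$, as claimed.

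The argument has no serious obstacle; the key conceptual step is Step 2, where one must be careful that the bound $\overline{q}$ is a deterministic, monotone function of $k$, so that a probabilistic statement about $\overline{q}$ can be rephrased as a binomial tail probability. The only subtlety worth double-checking is the boundary case $k = n$ (where $\overline{q} = 1$ trivially exceeds any shifted target $\le 1$) and the degenerate case $k_\alpha^* = 0$ (where $\delta_{true} = 0$, so the bound holds almost surely, consistent with the fact that $\overline{q}$ already dominates $q_{sim} + \alpha$ before any samples are seen).
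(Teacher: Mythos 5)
Your proof is correct and follows essentially the same route as the paper's: reduce $\Pr[q_{true} \le \overline{q}]$ to $\Pr[q_{sim}+\alpha \le \overline{q}]$ via the Bernoulli reading of the one-sided KS condition, rewrite that event as the binomial tail event $\{k \ge k_\alpha^*\}$ using the monotone dependence of $\overline{q}$ on $k$, and evaluate the tail under $k \sim \textup{Bin}(n, q_{sim})$. Your treatment is in fact slightly more careful than the paper's, since you make the two monotonicity facts behind the equivalence $\overline{q}(k) \ge q_{sim}+\alpha \iff k \ge k_\alpha^*$ explicit and check the boundary cases $k = n$ and $k_\alpha^* = 0$.
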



\section{\textcolor{black}{Constructing Robust Bounds}} \label{sec: robust_bounds}

\textcolor{black}{
Building on the bound sensitivity results, in this section we show how to construct robust bounds which will hold with the desired confidence level ($1-\delta$) even when there is mismatch between the simulated and real cost distributions. To construct robust bounds, we anticipate the potential cost distribution shift and appropriately increase the bounds, slightly modifying those described in Section \ref{sec: plan_evaluation}. 
}

\textcolor{black}{
Specifically, we again assume that the the true and simulated cost distributions are within $\alpha$ in terms of the one-sided Kolmogorov-Smirnov distance,
\begin{gather}
    \sup_x \ \text{CDF}_{\mathcal{D}_{sim}}(x) - \text{CDF}_{\mathcal{D}_{true}}(x) \le \alpha
\end{gather}
where $\mathcal{D}_{sim}, \mathcal{D}_{true}$ are the simulated and true cost distributions. Although precisely specifying $\alpha$ is problem-dependent and may be challenging, these robust bounds provide a principled mechanism for acting conservatively when the simulator is known to be inaccurate. Each of the expressions we give are presented as corollaries of the bound theorems in Section~\ref{sec: bound_sensitivity} and all proofs are deferred to the Appendix.
}



\begin{corollary}[\textcolor{black}{Robust VaR Bounds}] \label{cor: robust_var}
    \textcolor{black}{Consider $\tau, \delta \in (0,1)$ and $n$ IID cost samples from the simulator $J_{1:n} \sim \mathcal{D}_{sim}$ and assume that $\sup_x\text{CDF}_{\mathcal{D}_{sim}}(x) - \text{CDF}_{\mathcal{D}_{true}}(x) \le \alpha$. Then, we have the $\alpha$-robust VaR bound,
\begin{subequations}
    \begin{gather}
        \Pr[\VaR_\tau(J_{true}) \le \overline{\VaR}_{\tau}(\alpha)] \ge 1-\delta, \\
        \overline{\VaR}_{\tau}(\alpha) = \overline{\VaR}_{\tau + \alpha},
    \end{gather}
\end{subequations}
where $\overline{\VaR}_{\tau + \alpha}$ is constructed using $J_{1:n}$ as in Theorem \ref{thm: var_bound} to hold with probability $1-\delta$.
}
\end{corollary}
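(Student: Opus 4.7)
The plan is to reduce the robust bound claim to an application of Theorem~\ref{thm: var_bound} on the simulator samples, after establishing a purely deterministic (non-probabilistic) comparison between $\VaR_\tau$ of the true cost distribution and $\VaR_{\tau+\alpha}$ of the simulator's cost distribution. The hypothesis on the one-sided KS distance can be rewritten pointwise as
\[
\text{CDF}_{\mathcal{D}_{true}}(y) \,\ge\, \text{CDF}_{\mathcal{D}_{sim}}(y) - \alpha \quad \text{for every } y,
\]
which says the true distribution puts no less mass on the left tail than the inflated simulator CDF. This is the only way the KS assumption enters the argument.

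First, I would use this pointwise inequality to show the set inclusion $\{y : \text{CDF}_{\mathcal{D}_{sim}}(y) \ge \tau + \alpha\} \subseteq \{y : \text{CDF}_{\mathcal{D}_{true}}(y) \ge \tau\}$. Taking infima (and noting that the infimum of a subset is no smaller), Definition~\ref{Def:VaR} then yields the deterministic inequality
\[
\VaR_\tau(J_{true}) \,\le\, \VaR_{\tau+\alpha}(J_{sim}).
\]
This step handles the distributional mismatch and is essentially an observation about how VaR transforms under a shift of the CDF; no samples are involved yet.

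Next, I would apply Theorem~\ref{thm: var_bound} directly to the simulator samples $J_{1:n}\sim\mathcal{D}_{sim}$ at quantile level $\tau+\alpha$ and error rate $\delta$, which is precisely what the definition $\overline{\VaR}_\tau(\alpha) = \overline{\VaR}_{\tau+\alpha}$ calls for. This gives
\[
\Pr\bigl[\,\VaR_{\tau+\alpha}(J_{sim}) \,\le\, \overline{\VaR}_{\tau+\alpha}\,\bigr] \,\ge\, 1-\delta.
\]
Chaining this with the deterministic step above produces the claimed inequality $\Pr[\VaR_\tau(J_{true}) \le \overline{\VaR}_\tau(\alpha)] \ge 1-\delta$.

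The main subtlety I expect is boundary handling: the deterministic step implicitly assumes $\tau+\alpha \in (0,1)$, so I would explicitly state that if $\tau + \alpha \ge 1$ the corollary is vacuous unless we adopt the convention that $\VaR_1$ is the essential supremum (or invoke Assumption~\ref{Ass:UpperBound}), and I would note that the existence condition from Theorem~\ref{thm: var_bound} now reads $n \ge \lceil \ln(\delta)/\ln(\tau+\alpha) \rceil$, reflecting the extra sample cost of robustness. Beyond that, the argument is a clean two-line composition (deterministic CDF shift followed by the existing sample-based VaR bound), with no concentration machinery needed because all the statistical work is already done by Theorem~\ref{thm: var_bound}.
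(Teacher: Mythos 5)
Your proposal is correct and follows essentially the same route as the paper's proof: a deterministic CDF-shift argument giving $\VaR_\tau(J_{true}) \le \VaR_{\tau+\alpha}(J_{sim})$, chained with Theorem~\ref{thm: var_bound} applied to the simulator samples at level $\tau+\alpha$. You actually supply more detail than the paper (the set-inclusion justification of the deterministic step and the boundary caveats on $\tau+\alpha$ and the sample-size requirement), which the paper leaves implicit.
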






\begin{corollary}[\textcolor{black}{Robust $\mathbb{E}$ and CVaR Bounds}] \label{cor: robust_cvar}
    \textcolor{black}{Consider $\tau \in [0,1)$, $\delta \in (0,0.5]$, an upper bound $J_\ub$, and $n$ IID cost samples from the simulator $J_{1:n} \sim \mathcal{D}_{sim}$ and assume that $\sup_x \text{CDF}_{\mathcal{D}_{sim}}(x) - \text{CDF}_{\mathcal{D}_{true}}(x) \le \alpha$. Then, replacing the DKW gap $\epsilon(\delta,n)$ by $\epsilon' = \epsilon(\delta, n) + \alpha$ when applying Theorem \ref{thm: cvar_bound} with $J_{1:n}$ we have the $\alpha$-robust CVaR bound,
\begin{subequations}
    \begin{gather}
        \Pr[\CVaR_{\tau}(J_{true}) \le \overline{\CVaR}_{\tau}(\alpha)] \ge 1-\delta, \\
        \overline{\CVaR}_{\tau}(\alpha) =  \nonumber \\
        \frac{1}{1-\tau}\Big[\epsilon' J_{ub} + 
        \left(\frac{k}{n}- \epsilon' - \tau\right)J_{(k)} + \frac{1}{n}\sum_{i = k+1}^nJ_{(i)}\Big],
    \end{gather}
\end{subequations}
where $k$ is the smallest index such that $\frac{k}{n}- \epsilon' - \tau \geq 0$.}
\end{corollary}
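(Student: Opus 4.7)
The plan is to piggyback on the proof of Theorem \ref{thm: cvar_bound} by replacing the single concentration inequality that links the empirical CDF to the underlying CDF with a two-step argument that links the empirical CDF (from simulator samples) to the \emph{true} CDF. Essentially, the original CVaR bound proof is distribution-agnostic: it derives the formula purely from a high-probability one-sided uniform bound $\sup_x \hat{\CDF}_n(x) - \CDF(x) \le \epsilon$ together with the representation $\CVaR_\tau(J) = \frac{1}{1-\tau}\int_\tau^1 \VaR_\gamma(J)\,d\gamma$. So the entire task reduces to producing such a bound, with gap $\epsilon'$, for $\CDF_{\mathcal{D}_{true}}$.

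First I would apply the (one-sided) DKW inequality to the $n$ IID samples $J_{1:n} \sim \mathcal{D}_{sim}$, yielding that with probability at least $1-\delta$,
\[
\sup_x \bigl(\hat{\CDF}_n(x) - \CDF_{\mathcal{D}_{sim}}(x)\bigr) \le \epsilon(\delta,n).
\]
Next, I would invoke the hypothesis $\sup_x \bigl(\CDF_{\mathcal{D}_{sim}}(x) - \CDF_{\mathcal{D}_{true}}(x)\bigr) \le \alpha$ and combine the two inequalities pointwise (a one-sided triangle inequality) to obtain, on the same high-probability event,
\[
\sup_x \bigl(\hat{\CDF}_n(x) - \CDF_{\mathcal{D}_{true}}(x)\bigr) \le \epsilon(\delta,n) + \alpha = \epsilon'.
\]

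Given this uniform one-sided concentration bound with gap $\epsilon'$ against the true CDF, I would then re-run the argument from the proof of Theorem \ref{thm: cvar_bound} verbatim, substituting $\epsilon'$ for $\epsilon$ in every place it appears. Specifically, the event above implies $\CDF_{\mathcal{D}_{true}}(x) \ge \hat{\CDF}_n(x) - \epsilon'$ pointwise, which in turn gives $\VaR_\gamma(J_{true}) \le \hat{\VaR}_{\gamma + \epsilon'}(J)$ for all $\gamma$, with any values of $\gamma + \epsilon' > 1$ handled by the almost-sure bound $J_{\ub}$. Integrating over $\gamma \in [\tau, 1]$, dividing by $1-\tau$, and reindexing the integral in terms of order statistics reproduces exactly the closed-form expression in the corollary, since the only change in the algebra is the replacement $\epsilon \mapsto \epsilon'$ in the coefficients on $J_{\ub}$ and on $J_{(k)}$, and the threshold condition defining $k$.

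The only real obstacle is bookkeeping: I need to verify that $\epsilon'$ satisfies the same admissibility condition $\epsilon' \le 1-\tau$ that $\epsilon$ did in Theorem \ref{thm: cvar_bound}, so that a valid index $k$ with $\tfrac{k}{n} - \epsilon' - \tau \ge 0$ exists and the almost-sure-upper-bound default branch is not triggered. This is implicit in the hypothesis $\alpha \le \sqrt{-\ln(2\delta_{sim})/(2n)}$ inherited from Corollary \ref{cor: sens_cvar} combined with the original sample-size requirement, and should be stated as a regularity condition in the corollary's applicability. Beyond this, because the expected value case is recovered by taking $\tau = 0$, the same argument simultaneously yields the robust $\E$ bound, so no separate proof is needed.
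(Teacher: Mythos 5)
Your proposal is correct and follows essentially the same route as the paper's proof: apply the one-sided DKW inequality to the simulator samples, shift the resulting lower CDF envelope down by an additional $\alpha$ so that it lower-bounds $\CDF_{\mathcal{D}_{true}}$ uniformly with probability $1-\delta$, and then rerun the integration-of-$\overline{\VaR}_\gamma$ argument from Theorem~\ref{thm: cvar_bound} with $\epsilon$ replaced by $\epsilon' = \epsilon + \alpha$. Your added remark that the admissibility condition $\epsilon' \le 1-\tau$ (hence a larger minimum $n$) should be checked before the closed-form expression applies is a sound bookkeeping point that the paper leaves implicit.
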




 \textcolor{black}{Taking $\tau = 0$ immediately yields a similar robust bound for the expected value.}


\begin{corollary}[\textcolor{black}{Robust Failure Probability Bounds}] \label{cor: robust_fail_prob}
    \textcolor{black}{Consider $\delta \in (0,1)$ and $n$ IID Bernoulli samples $J_{1:n}$ obtained in simulation (where $J = 1$ denotes failure) with $k = \sum_{i=1}^n J_i$ failures and assume $\sup_x \text{CDF}_{\mathcal{D}_{sim}}(x) - \text{CDF}_{\mathcal{D}_{true}}(x) \le \alpha$ (i.e., $q_{true} \le q_{sim} + \alpha$). Then, we have the $\alpha$-robust failure probability bound,
\begin{subequations}
    \begin{gather}
    \Pr\left[q_{true} \le \overline{q}(\alpha)\right] \geq 1 - \delta, \\
        \overline{q}(\alpha) = \max \{q' \in [0,1] \mid \textup{Bin}(k;n,q'-\alpha) \ge \delta \}.
    \end{gather}
\end{subequations}
}
\end{corollary}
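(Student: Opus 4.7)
The plan is to reduce the robust failure probability bound directly to the non-robust bound of Theorem~\ref{thm: feasibility_bound} by exploiting the simple translational structure of the defining maximization. The key preliminary observation is that for two Bernoulli distributions with parameters $q_{sim}$ and $q_{true}$, the one-sided Kolmogorov--Smirnov distance assumption is equivalent to $q_{true} \le q_{sim} + \alpha$; this follows from a short case analysis of the two-step CDFs, since the only nontrivial value of $\CDF_{\mathcal{D}_{sim}}(x) - \CDF_{\mathcal{D}_{true}}(x)$ occurs on $x \in [0,1)$ and equals $q_{true} - q_{sim}$. Thus, a high-probability upper bound on $q_{sim}$ can be inflated by $\alpha$ to bound $q_{true}$.

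Concretely, I would first apply Theorem~\ref{thm: feasibility_bound} to the simulator samples $J_{1:n} \sim \textup{Bernoulli}(q_{sim})$ with $k = \sum_i J_i$ failures, obtaining $\overline{q} = \max\{p \in [0,1] \mid \Bin(k;n,p) \ge \delta\}$ with the guarantee $\Pr[q_{sim} \le \overline{q}] \ge 1-\delta$. Next, I would show by the change of variables $p = q' - \alpha$ in the defining maximization of $\overline{q}(\alpha)$ that $\overline{q}(\alpha) = \min(\overline{q} + \alpha,\, 1)$. This uses that $\Bin(k;n,\cdot)$ is nonincreasing in its third argument, so the feasible set $\{p \in [0,1] \mid \Bin(k;n,p)\ge\delta\}$ is an interval of the form $[0,\overline{q}]$, and the outer clipping at $1$ comes from the constraint $q' \in [0,1]$ in the definition.

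Combining these two ingredients, on the event $\{q_{sim} \le \overline{q}\}$, which holds with probability at least $1 - \delta$, we have
\begin{equation*}
q_{true} \;\le\; q_{sim} + \alpha \;\le\; \overline{q} + \alpha,
\end{equation*}
and since $q_{true} \le 1$ trivially, we further obtain $q_{true} \le \min(\overline{q}+\alpha,\, 1) = \overline{q}(\alpha)$. This yields $\Pr[q_{true} \le \overline{q}(\alpha)] \ge 1 - \delta$ as desired.

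No substantial obstacle is expected here: the translation of the Binomial parameter by $\alpha$ inside $\Bin(k;n,\cdot)$ mirrors exactly the additive shift in $q$ afforded by the one-sided KS assumption. The only minor care required is verifying the KS-to-Bernoulli parameter equivalence and handling the degenerate case where $\overline{q} + \alpha > 1$, both of which are essentially bookkeeping.
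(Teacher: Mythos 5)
Your proof is correct and follows essentially the same route as the paper: both arguments reduce the robust bound to Theorem~\ref{thm: feasibility_bound} by exploiting monotonicity of $\Bin(k;n,\cdot)$ in its third argument together with the additive shift $q_{true} \le q_{sim}+\alpha$, with the paper phrasing this via the critical index $k^*$ rather than via your explicit identity $\overline{q}(\alpha)=\min(\overline{q}+\alpha,1)$. That closed-form identity is a nice addition not stated in the paper, but the underlying argument is the same.
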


\textcolor{black}{In Figure \ref{fig:pr_robust} we plot the confidence level of robust failure probability bounds constructed with different $\alpha$ as we impose a distribution shift by misspecifying the noise parameter $\sigma$ over the initial state distribution for the Ant environment. Whenever the distribution shift is within the allowed tolerance ($q_{true} \leq q_{sim} + \alpha$), as designated by a circle, the confidence level of the bound remains at least $1 - \delta$ confirming Corollary~\ref{cor: robust_fail_prob}.}

\begin{figure}
    \centering
    \includegraphics[width=\linewidth]{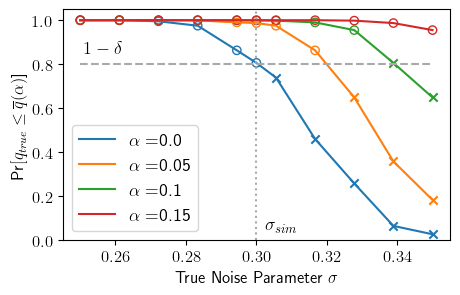}
    \caption{\textcolor{black}{Confidence level of robust failure probability bounds as a sim-to-real mismatch is varied. Similar to Fig.~\ref{fig:var_sensitivity}, we plot the empirical confidence level associated with robust bounds constructed with different tolerances $\alpha \in [0, 0.05, 0.1, 0.15]$ as we vary the parameter $\sigma$ controlling the standard deviation of the initial state distribution in the Ant environment. We observe that even under distribution shift, whenever $q_{true} \leq q_{sim} + \alpha$ (denoted by circles), the associated robust bound $\overline{q}(\alpha)$ holds with confidence level at least $1-\delta$, as expected from Corollary \ref{cor: robust_fail_prob}. The plotted points are generated by varying $\sigma$, using $10,000$ simulations to estimate $q_{true}$, and using $1000$ realizations of $\overline{q}(\alpha)$.}}
    \label{fig:pr_robust}
\end{figure}

\section{Policy Selection} \label{sec: multi-hyp}
Thus far we have presented a method for rigorously assessing the quality of a policy by placing bounds on performance measures of the trajectory cost. A natural extension is to use these bounds when selecting between several candidate policies. In this case, we must take care to apply an appropriate correction to the bounds for the resulting bound \textcolor{black}{on the chosen policy} to remain probabilistically valid. To illustrate the need for a correction, consider a thought experiment where $100$ identical policies are considered as candidates for execution. \textcolor{black}{For each of these identical policies, we generate a fresh set of stochastic simulation rollouts and use these to compute a performance bound. We then choose to execute the policy achieving the lowest bound. Of course, this a false choice since all the policies are the same. However, because the rollouts are stochastic, the computed bounds will fluctuate even though all used the same policy. Choosing the lowest bound among the $100$ will thus give an artificially low performance bound for the associated policy as the chosen bound is the result of a lucky draw of rollouts. Therefore, we cannot expect the resulting bound to still hold with probability at least $1 - \delta$. In other words, since the bounds hold probabilistically, one bound among the $100$ is likely to be overly optimistic by chance, and may even be lower than the true \textcolor{black}{performance measure}, i.e., an invalid bound.} To remedy this problem we explain a statistical correction for comparing bounds among a set of candidate policies.
Consider $m$ (not necessarily independent) policies $\mathcal{U}_{1:m}$ where
for each policy $\mathcal{U}_i$ we obtain $n$ IID trajectory rollouts, obtaining trajectory cost samples $J^{(i)}_{1:n}$. Let the performance measure we are interested in be denoted by $\mathcal{P}$ (e.g. $\mathcal{P} = \VaR_\tau$). Then, using the samples $J_{1:n}^{(i)}$ and the results from Section \ref{sec: plan_evaluation}, an individual upper bound $\overline{P}^{(i)}$ is computed for each policy, which has corresponding unknown true performance $P^{(i)}$. Now suppose we plan to execute the policy with least upper bound, that is,
\begin{align}
    \mathcal{U}^* = &\argmin_{\mathcal{U}_i \in \mathcal{U}_{1:m}} \quad \overline{P}^{(i)} .
\end{align}
Take $\overline{P}^*$ as the individual upper bound associated with $\mathcal{U}^*$ and $P^*$ as the true statistic (e.g. true $\VaR_\tau$) associated with $\mathcal{U}^*$. We are interested in understanding with what probability $\overline{P}^*$ upper bounds $P^*$, and formalize this in the below result.

\begin{theorem}[Uncorrected \textcolor{black}{Confidence Level}] \label{thm: uncor_cov}
    Suppose we are given $m$ policies $\mathcal{U}_{1:m}$, with associated unknown true performance $\{P^{(i)}\}_{i=1}^m$ and associated probabilistic bounds $\{\overline{P}^{(i)}\}_{i=1}^m$ individually holding with \textcolor{black}{confidence level} $1-\delta$ i.e., 
    \begin{align}
        \Pr[P^{(i)} \leq \overline{P}^{(i)}] \geq 1-\delta \quad \forall i.
    \end{align} 
    Let $\overline{P}^*$ be the lowest probabilistic bound and let $P^*$ be the associated true statistic i.e., 
    \begin{subequations}
        \begin{gather}
            i^* = \argmin_{i \in \{1,\ldots,m\}} \overline{P}^{(i)}, \\ 
            \overline{P}^* = \overline{P}^{(i^*)}, \\
            P^* = P^{(i^*)}.
        \end{gather}
    \end{subequations}
    Then, $\overline{P}^*$ bounds $P^*$ with probability at least $(1-\delta)^m$ i.e., 
    \begin{align}
        \Pr[P^* \leq \overline{P}^*] \geq (1-\delta)^m.
    \end{align}
\end{theorem}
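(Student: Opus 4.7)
The plan is to bound $\Pr[P^* \leq \overline{P}^*]$ from below by the probability of a stronger, more structured event: the event $E = \bigcap_{i=1}^m \{P^{(i)} \leq \overline{P}^{(i)}\}$ that \emph{every} individual bound holds simultaneously. The key observation is the set containment $E \subseteq \{P^* \leq \overline{P}^*\}$: if all $m$ bounds are valid, then in particular the bound indexed by $i^*$ is valid, no matter how $i^*$ is chosen as a function of $\overline{P}^{(1:m)}$. Monotonicity of probability then yields $\Pr[P^* \leq \overline{P}^*] \geq \Pr[E]$, and it remains to lower-bound $\Pr[E]$ by $(1-\delta)^m$.

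For this second step I would exploit independence across policies. Although the policies $\mathcal{U}_{1:m}$ themselves may be statistically dependent, each bound $\overline{P}^{(i)}$ is constructed from its own fresh batch of $n$ IID rollouts $J^{(i)}_{1:n}$ generated by simulating $\mathcal{U}_i$. By Assumption \ref{Ass:IID}, and by running a separate independent simulation batch for each candidate policy, the rollout batches across policies can be taken to be mutually independent conditional on $\mathcal{U}_{1:m}$. Since each bound event is a measurable function of only its associated policy and rollout batch, the events $\{P^{(i)} \leq \overline{P}^{(i)}\}$ are conditionally mutually independent. Thus I would write
\[
\Pr[E \mid \mathcal{U}_{1:m}] = \prod_{i=1}^m \Pr\bigl[P^{(i)} \leq \overline{P}^{(i)} \,\big|\, \mathcal{U}_i\bigr] \geq (1-\delta)^m,
\]
using the per-policy confidence guarantee in each factor, and then take expectation over $\mathcal{U}_{1:m}$ to preserve the lower bound.

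The main obstacle I anticipate is handling the random selection index $i^*$ cleanly, since $i^*$ is itself a function of the same randomness used to produce the bounds, so one cannot naively condition on $i^*$. The containment-based argument above sidesteps this precisely by never singling out $i^*$ on the right-hand side of a probability inequality; it reduces everything to a uniform statement over $i$. A secondary point to make explicit in the write-up is why the independence across policies is a reasonable reading of the hypotheses: it follows from the convention that distinct policies are evaluated on disjoint, independently generated simulation runs, which is consistent with Assumption \ref{Ass:IID} applied batch-wise. With these two points addressed, the proof consists of just the containment step followed by the product decomposition.
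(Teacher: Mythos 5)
Your proposal is correct and follows essentially the same route as the paper's proof: lower-bound the selection event by the intersection event that all $m$ bounds hold, then factor that intersection using independence of the bound events (justified by each bound being computed from its own fresh batch of rollouts), and bound each factor by $1-\delta$. Your extra step of conditioning on $\mathcal{U}_{1:m}$ before factoring is a slightly more careful rendering of the independence claim the paper asserts directly, but it is not a different argument.
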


Before proceeding, we consider two limiting cases of Theorem \ref{thm: uncor_cov}. Temporarily assume that the \textcolor{black}{individual} bounds \textcolor{black}{hold with probability $1 - \delta$} exactly i.e.,
\begin{align}
    \Pr[P^{(i)} \leq \overline{P}^{(i)}] = 1-\delta
\end{align}
then 
\begin{enumerate}
    \item When each bound is bounding the same statistic (i.e., $P^{(1)} = \ldots = P^{(m)}$) then
    \begin{gather}
        \Pr[P^* \le \overline{P}^*] = (1-\delta)^m,
    \end{gather}
    where this follows as all bounds must hold for the minimum bound to hold in this case.
    \item When the cost samples $J_{1:n}^{(i)}$ associated with each policy $\mathcal{U}_i$ are confined to disjoint intervals (i.e., for each $\mathcal{U}_i$, $J \in [J_{lb}^{(i)}, J_{ub}^{(i)}] \coloneqq D_i$ with $D_i \cap D_j = \emptyset \ \forall i \neq j$) then
    \begin{gather}
        \Pr[P^* \le \overline{P}^*] = 1-\delta
    \end{gather}
    where this follows as only one bound, the one generated using the policy $\mathcal{U}_i$ having the lowest interval $D_i$, needs to hold for the minimum bound to hold in this case. 
\end{enumerate}
These two limiting cases show the extremes we must consider when making a correction for multiple policies. In the best case (case 2), \textcolor{black}{the resulting bound $\overline{P}^*$ still holds with probability $1 - \delta$ i.e., the error rate is unchanged}, while in the worst case (case 1), \textcolor{black}{the error rate can increase dramatically} for a large set of candidates. Since we want to avoid distributional assumptions, we must consider the worst case scenario, in which case Theorem \ref{thm: uncor_cov} is tight.

Theorem \ref{thm: uncor_cov} captures the worst case when comparing multiple policies and selecting the policy with lowest bound. Thus, we can use it to correct for the \textcolor{black}{comparison} by first inflating the \textcolor{black}{required probability that each individual bound hold}, as described in the below Theorem.

\begin{theorem}[\textcolor{black}{Multi-Policy Bound} Correction] \label{thm: cor_cov}
    The resulting bound $\overline{P}^*$ obtained when selecting the lowest bound among multiple policies holds with probability at least $1 - \delta$ if each individual bound $\overline{P}^{(i)}$ is inflated to satisfy
    \begin{subequations}
        \begin{align}
        \Pr[P^{(i)} \le \overline{P}^{(i)}] \ge 1-\Bar{\delta}, \\
        \Bar{\delta} = 1 - (1-\delta)^{1/m}. \label{eq: correction}
    \end{align}
    \end{subequations}
\end{theorem}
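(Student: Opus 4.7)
The plan is to derive this as a direct algebraic consequence of Theorem \ref{thm: uncor_cov}. The uncorrected statement tells us that if each individual bound $\overline{P}^{(i)}$ holds with probability at least $1-\delta$, then the minimum bound $\overline{P}^*$ holds with probability at least $(1-\delta)^m$. Our objective is to reverse-engineer a per-bound confidence level $1-\bar{\delta}$ that will force this post-selection guarantee to hit the user-specified target $1-\delta$.

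First I would instantiate Theorem \ref{thm: uncor_cov} with the inflated bounds: assume each $\overline{P}^{(i)}$ is constructed so that $\Pr[P^{(i)} \le \overline{P}^{(i)}] \ge 1-\bar{\delta}$. By the theorem, this immediately yields
\begin{equation*}
    \Pr[P^* \le \overline{P}^*] \ge (1-\bar{\delta})^m.
\end{equation*}
Next I would enforce the requirement $(1-\bar{\delta})^m \ge 1-\delta$ and solve for $\bar{\delta}$. Taking $m$th roots on both sides (noting that $x \mapsto x^{1/m}$ is monotone increasing on $[0,1]$ so the inequality direction is preserved) gives $1-\bar{\delta} \ge (1-\delta)^{1/m}$, i.e.\ $\bar{\delta} \le 1-(1-\delta)^{1/m}$. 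Choosing $\bar{\delta}$ to be exactly this value, as specified in \eqref{eq: correction}, yields equality $(1-\bar{\delta})^m = 1-\delta$, and therefore $\Pr[P^* \le \overline{P}^*] \ge 1-\delta$, which is the claim.

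There is essentially no obstacle beyond this substitution argument; the real technical content already lives in Theorem \ref{thm: uncor_cov}. The only subtlety worth flagging is that the hypothesis ``at least $1-\bar{\delta}$'' (rather than ``exactly'') is preserved through the monotonicity of $x \mapsto x^m$ on $[0,1]$, so the conclusion is likewise a one-sided inequality rather than an equality. I would also remark that the theorem makes no independence assumption across the $m$ policies: since Theorem \ref{thm: uncor_cov} itself handles arbitrary (possibly dependent) bounds via its worst-case analysis, the correction in \eqref{eq: correction} inherits that same generality and thus applies regardless of whether the candidate policies share rollouts, are sequentially optimized, or are otherwise coupled.
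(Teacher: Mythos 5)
Your proof is correct and is essentially identical to the paper's: both simply instantiate Theorem \ref{thm: uncor_cov} with per-bound error rate $\bar{\delta}$ and solve $(1-\bar{\delta})^m = 1-\delta$. One caveat on your closing remark: while the \emph{policies} may indeed be arbitrarily dependent, the proof of Theorem \ref{thm: uncor_cov} explicitly relies on the events $\{P^{(i)} \le \overline{P}^{(i)}\}$ being independent across $i$, which holds only because each bound is computed from a fresh, independent batch of rollouts; the paper's remark on shared rollout seeds notes that if the same rollouts (or a fixed seed) are reused across policies, this independence fails and one must fall back on the weaker Bonferroni correction rather than the \v{S}id\'{a}k-style correction in \eqref{eq: correction}. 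So the correction does \emph{not} apply unchanged ``regardless of whether the candidate policies share rollouts.''
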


\begin{remark}[Multi-Hypothesis Connection]
    This correction is identical to the Šidák correction~\cite{abdi2007} for testing multiple hypotheses. Although we are only interested in ensuring that the minimizing bound be probabilistically valid, we end up needing to inflate as if we required all bounds to hold due to case 1 (the typical use case of multi-hypothesis correction).
\end{remark}

\textcolor{black}{
\begin{remark}[Considering Shared Rollout Seeds]
In our approach, for each policy we generate a bound using a fresh set of stochastic simulation rollouts. However, one might consider instead fixing a random seed e.g., fixing a set of sampled environments, which is then reused when generating the rollouts for each policy. Even with this modification, a multi-hypothesis correction is still needed. In fact, since under this fixed seed procedure, the rollouts for each policy would no longer be independent, we would have to resort to the weaker Bonferroni correction~\cite{abdi2007} over the Šidák.
\end{remark}
}

As mentioned previously, introducing the necessity of this bound correction is one of the contributions of this paper. In Fig.~\ref{fig:plan_selection} we summarize the process for selecting the policy with the lowest performance bound while retaining statistical validity using the multi-hypothesis correction. In Section \ref{sec:examples} we show the necessity of the correction by comparing against the naive uncorrected bound in the setting of object manipulation.
 
\begin{figure}
    \centering
    \includegraphics[width=\linewidth]{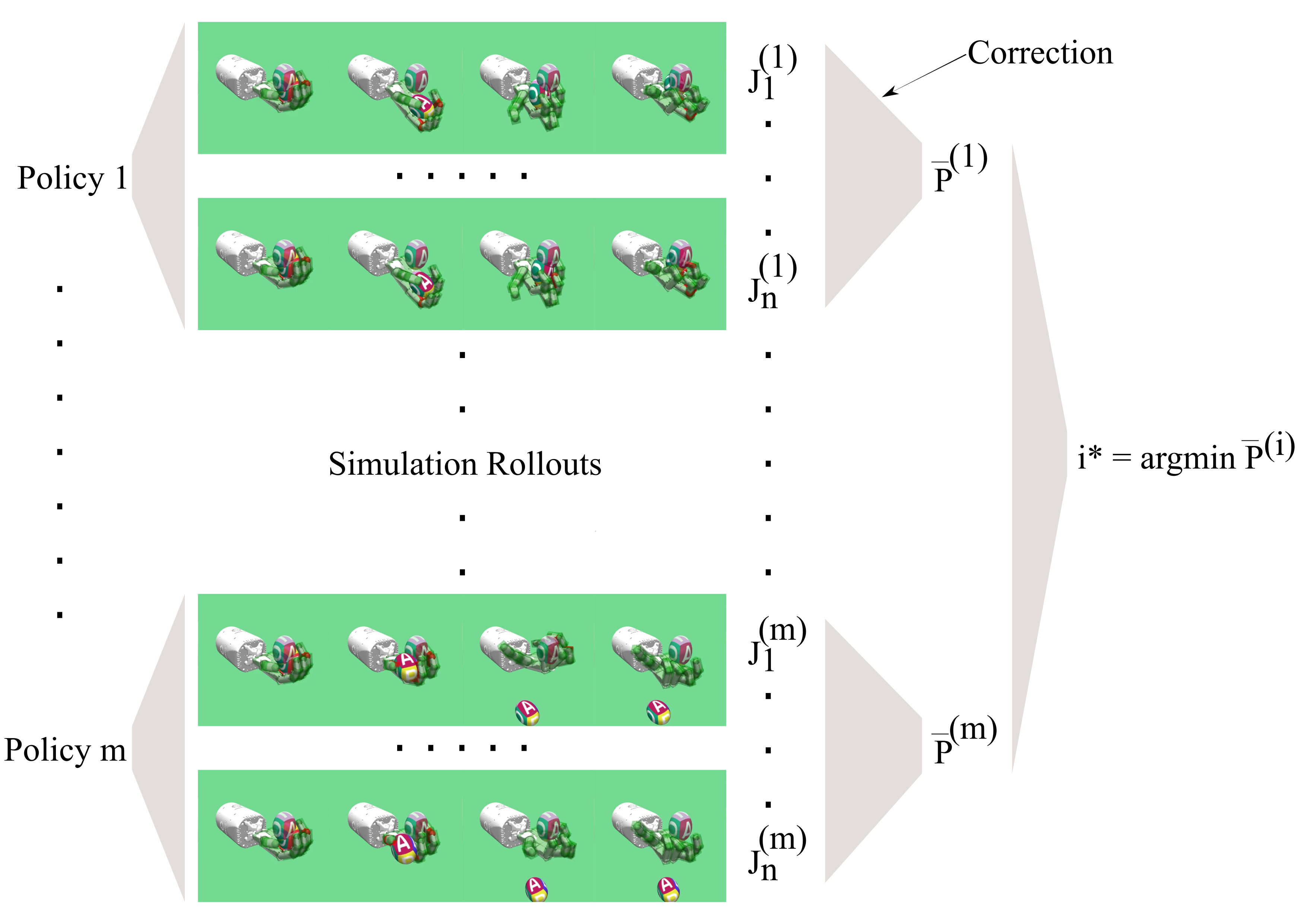}
    \caption{Overview of our method for comparing performance among a set of policies for manipulating an egg of uncertain mass and friction with the MuJoCo Shadow Hand. Given $m$ candidate policies, each policy is executed in simulation $n$ times and each associated trajectory cost is recorded. These cost samples are then used to compute a probabilistic upper bound on performance for each policy $\overline{P}^{(i)}$. Lastly, the policy achieving the minimum bound is selected, giving a probabilistic guarantee on policy performance. To ensure that this final bound holds with a user-specified probability $(1-\delta)$, we apply a multi-hypothesis correction to each of the individual bounds.}
    \label{fig:plan_selection}
\end{figure}



\section{Manipulation Example} \label{sec:examples}\
In this section we demonstrate the validity and necessity of the multi-hypothesis correction in the MuJoCo Shadow Dexterous Hand environment~\cite{shadow_hand} for simulating in-hand manipulation. In realistic settings, the object to be manipulated may have uncertain physical parameters such as mass and friction that can only be roughly estimated or inferred from interaction but are not directly observable. In this example, we show how for such a setting we can use our distribution-free method to select among several candidate control policies for manipulating the object, obtaining an associated performance bound for the chosen policy. This experiment is chosen to emphasize that our approach works with complex dynamics involving contact and the discontinuities therein.

To generate candidate policies, we use a simple sampling-based planner inspired by the approach in~\cite{howell2022predictive}. Critical to the success of the planner is not to sample actions for each step of the planning horizon, but to take a spline interpolation of sub-sampled points This reduces the effective size of the action space and enforces smoothness. Using this method, we generated $m = 20$ open-loop plans over a horizon of $100$ timesteps each designed for manipulating Gymnasium's egg object assuming a nominal density of $1000 \frac{kg}{m^3}$, sliding friction coefficient of $1$, torsional friction coefficient of $0.005$, and rolling friction coefficient of $0.0001$. We randomize over density rather than mass as this is more standard in MuJoCo, and the object volume is fixed, meaning that randomizing over density and mass are equivalent in this setting.

Given these candidate policies, we applied Theorem~\ref{thm: cor_cov} to inflate the confidence level and select the bound-minimizing policy. We used $\VaR_\tau$ with $\tau = 0.7$ as the cost performance measure we wish to bound and specified a bound error rate of $\delta = 0.2$. The policies are evaluated in simulated environments now randomizing the friction and density of the egg object to simulate uncertainty in the true object's physical parameters. The specific uncertainties are
\begin{itemize}
    \item $\textup{density} \sim \textup{Uniform}[700, 1200] \frac{kg}{m^3}$
    \item $\textup{sliding friction coefficient} \sim \textup{Uniform}[0.8, 1.2]$
    \item $\textup{torsional friction coefficient} \sim \textup{Uniform}[0.004, 0.006]$
    \item $\textup{rolling friction coefficient} \sim \textup{Uniform}[0.00008, 0.00014]$.
\end{itemize}
The individual bound for each policy is computed using the total cost obtained in $30$ simulated executions of it. The cost is based on aligning the egg with a desired goal position and orientation. The chosen goal orientation requires flipping the egg object from its starting orientation. Fig.~\ref{fig:plan_selection} provides an overview of the policy selection and bound computation procedure used in this example and shows selected frames from simulations of the first and last of the 20 plans compared.

\begin{figure}
    \centering
    \includegraphics[width=\linewidth]{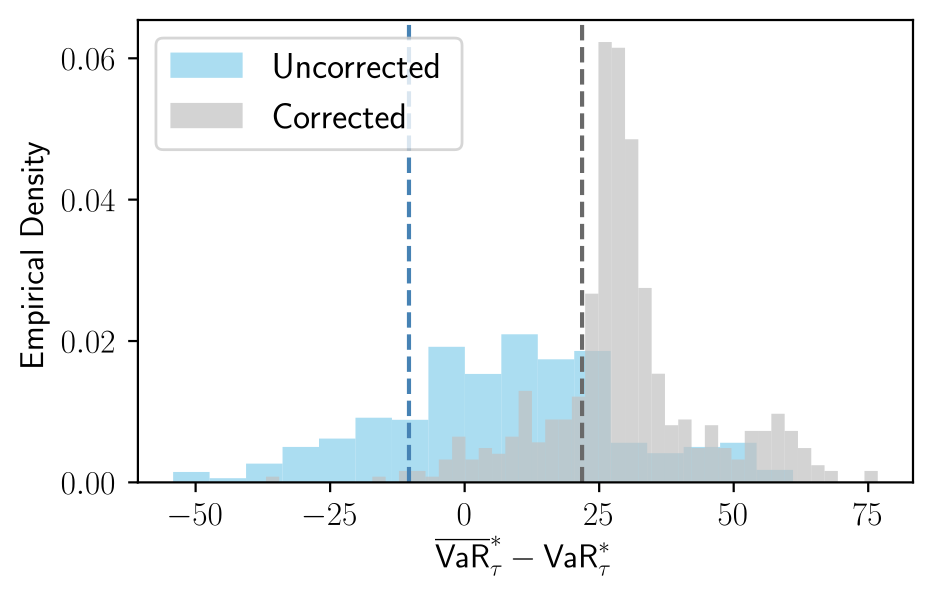}
    \caption{This figure shows the validity of the correction given by Theorem~\ref{thm: cor_cov} for the Shadow Hand environment when manipulating the egg object with uncertain density and friction. Plotted are two distributions, of $\overline{\VaR}_{\tau}^* - \textup{VaR}_\tau^*$. In blue $\overline{\VaR}_{\tau}^*$ is chosen without the correction and in gray $\overline{\VaR}_{\tau}^*$ is chosen with the multi-hypothesis correction specified in Theorem~\ref{thm: cor_cov}. Each sample in the histogram is generated by selecting the best \textcolor{black}{bound ($\overline{\VaR}_{\tau}^*$}) among 20 pre-computed plans based on either the corrected or uncorrected bound \textcolor{black}{and subtracting the true performance measure $\textup{VaR}_\tau^*$ associated with the chosen policy.} Each histogram is generated with 500 repetitions. Only for the corrected bound does the $\delta$ quantile dashed line lie above $0$. Thus when selecting a policy with multi-hypothesis correction the desired error rate is achieved, while this is not the case when using the uncorrected bound.}
    \label{fig:multi_hyp_valid}
\end{figure}

By repeatedly generating fresh cost samples to run many simulated experiments, we show in Fig.~\ref{fig:multi_hyp_valid} that while our multi-hypothesis corrected bound is valid with at least the specified probability of $1 - \delta$, naively using the uncorrected bound is too optimistic and fails to achieve this confidece level. \textcolor{black}{Specifically, we plot the distribution of $\overline{\VaR}_{\tau}^* - \textup{VaR}_\tau^*$ over 500 repetitions where in the blue histogram $\overline{\VaR}_{\tau}^*$ is chosen without correction and in the gray histogram chosen with the multi-hypothesis correction. Each histogram sample is generated by selecting the lowest bound ($\overline{\VaR}_{\tau}^*$) among 20 pre-computed open-loop policies and subtracting the true performance measure $\textup{VaR}_\tau^*$ associated with the chosen policy. Since the bound holds when $\overline{\VaR}_{\tau}^* - \textup{VaR}_\tau^* \geq 0$, if the bound holds with probability at least $1 - \delta$, the $\delta$ quantile of the corresponding $\overline{\VaR}_{\tau}^* - \textup{VaR}_\tau^*$ distribution should be nonnegative. We observe this to be the case when the bound is generated with correction, as the gray dashed line showing the $\delta$ quantile of the associated $\overline{\VaR}_{\tau}^* - \textup{VaR}_\tau^*$ distribution lies right of 0. However, the $\delta$ quantile when generating the bound without correction, shown as the blue dashed line, lies left of 0. Thus, we observe that the corrected bound holds with probability at least $1 - \delta$ as guaranteed by Theorem~\ref{thm: cor_cov} while the uncorrected bound does not, illustrating the necessity of the multi-hypothesis correction.}

\section{Conclusion} \label{sec:conclusion}
In this paper we demonstrate how sampling-based, distribution-free bounds can be used to rigorously bound the performance of a control policy applied in a stochastic environment. These bounds can also be used to verify safety of a policy via constraint tests with a guaranteed false acceptance rate. \textcolor{black}{Furthermore, we provide a thorough analysis of the sensitivity of our bounds to sim-to-real distribution shifts and provide results for constructing robust bounds that can tolerate specified amounts of distribution shift.}
Finally, we show how to apply these bounds when selecting the best policy from a set of candidates which requires a multi-hypothesis correction to retain validity. Because these bounds are distribution-free, they can be applied to complex systems and uncertain environments without requiring knowledge of the underlying problem structure. Rather, our approach only requires simulating policy execution in the stochastic environment and recording the associated cost or constraint value. We empirically demonstrated bound validity in several MuJoCo environments including for the problem of object manipulation which is high-dimensional and has discontinuous dynamics.

In this work, we only studied a few performance measures but our approach extends to other measures if corresponding bounds are available. Another interesting direction for future work is to use our method to select the best risk-sensitive plans in domain-randomized simulations that are then used as training data for a policy with hopes of better sim-to-real performance. \textcolor{black}{Yet another avenue for future work is to use the sample-based bounds for risk and constraint satisfaction to guide an importance sampling procedure (over open-loop plans) within a stochastic model-based planner such as MPPI~\cite{williams2016aggressive} or CEM~\cite{mannor2003cem}.}


\section*{Acknowledgments}
The authors would like to acknowledge Taylor Howell for his advice on the sampling-based manipulation policy.

\bibliographystyle{ieeetr} 
\bibliography{references}


\section*{Appendix - Simulation Details}
To create Figure~\ref{fig:valid_stat}, we perturbed the default starting state in MuJoCo using $\textup{reset\_noise\_scale} = 0.1$. The sole exception to this was for the sparse cost case with the Ant where we used $\textup{reset\_noise\_scale} = 0.3$ since the default of 0.1 was never enough to push the Ant outside the healthy range of $[0.2, 1]$ when using the action sequence optimized by CEM.

To identify the candidate policy, cross entropy method (CEM)~\cite{mannor2003cem} was used. 10 generations of optimization were performed, where in each generation 100 open-loop policies were generated by sampling the control input at each timestep using a Gaussian distribution based on the estimated mean and variance from the previous generation. Based on a single execution in the random environment, the top performing 10 sample plans were then used to fit the Gaussian distribution for the next generation. After the final generation, the top performing plan was selected as the candidate. Plans had a horizon of 20 timesteps.

To approximate the distribution over cost, the candidate policy was executed 10,000 times. The associated theoretical statistic was then found by taking the empirical average cost for $\mathbb{E}$, the empirical quantile for $\textup{VaR}_{\tau}$, or the Monte Carlo approximation for $\textup{CVaR}_{\tau}$ (see~\cite{shapiro2021lectures}). To compute $\mathbb{E}$ and $\textup{CVaR}_{\tau}$, the total costs in the Ant environment were clipped to between $[-2H, 0]$ and between $[-0.325H, 0.1H]$ for the Swimmer environment where $H =~20$ was the horizon.

\section*{Appendix - Bound Comparison}
\textcolor{black}{Here we compare the bounds we use for $\VaR_{\tau}$, expected value, and $\CVaR_{\tau}$ with those used by other papers in the robotics and statistics literature (\cite{,caltech_policy_synth,risk_verification,cvar_kolla}). Figure~\ref{fig:compare_bounds} shows the empirical bound distribution generated by each approach using the the same experimental parameters as when constructing Figure~\ref{fig:valid_stat}. In every case the bounds we use are the least conservative (better estimating the unknown performance measure) while still meeting the desired $1-\delta$ confidence level.}

\textcolor{black}{From the derivation of our $\VaR_{\tau}$ bound, it is clear that choosing a smaller order statistic for the bound provably violates the $1-\delta$ confidence level. Thus, amongst $\VaR_{\tau}$ bounds constructed as a particular order statistic, ours is unimprovable. Because of this property, our $\VaR_{\tau}$ bound will always be less conservative than those used in~\cite{risk_verification,cvar_kolla}. Statements of similar generality cannot be made in the case of the expected value and $\CVaR_{\tau}$ bounds, although the empirical results in Figure~\ref{fig:compare_bounds} provides good evidence that the bounds we use in these cases are to be preferred to the bounds used in~\cite{caltech_policy_synth,risk_verification,cvar_kolla}.}

\begin{figure}
     \centering
     \subfloat[]{\includegraphics[width=\linewidth]{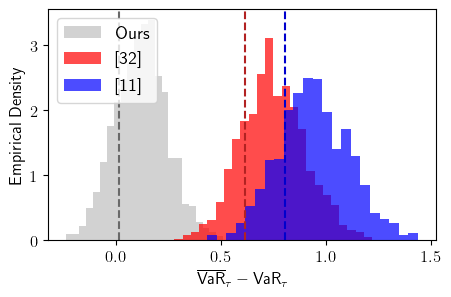}\label{fig: compare_var_bound}}
     \hfill
     \subfloat[]{\includegraphics[width=\linewidth]{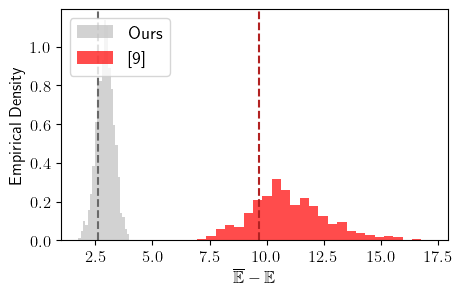}\label{fig: compare_exp_bound}}
     \hfill
     \subfloat[]{\includegraphics[width=\linewidth]{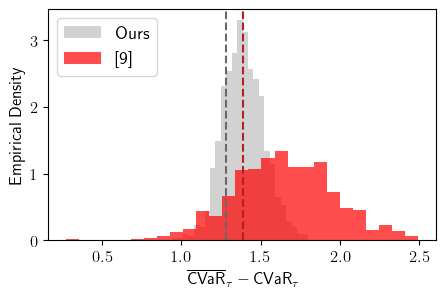}\label{fig: compare_cvar_bound}}
    \caption{\textcolor{black}{Distribution of bound offsets for (a) $\VaR_{\tau}$ in the Half Cheetah environment, (b) expectation in the Ant environment, (c) $\CVaR_{\tau}$ in the Swimmer environment. All bounds are probabilistically valid, since the vertical dashed lines (the $\delta$ quantiles) are to the right of zero, indicating that the generated bounds indeed exceed the true performance measure with probability at least 1-$\delta$. In every case the bounds we use are less conservative than the bounds used in \cite{caltech_policy_synth,risk_verification,cvar_kolla}. As in Fig.~\ref{fig:valid_stat}, the true performance measure is computed using $10,000$ rollouts and each bound histogram was generated by repeatedly computing the given sample-based bound $1000$ times, each time using a fresh set of $n = 100$ sampled policy rollouts. We again use $\tau = 0.7$.}}
    \label{fig:compare_bounds}
    
\end{figure}

\section*{Appendix - Proofs}
\textcolor{black}{Although we don't consider lower bounds or two-sided bounds, we note that lower bounds for each performance measure can be derived in similar fashion to our derivations for the upper bounds. Two-sided bounds can then be constructed by combining lower and upper bounds and adjusting the confidence level (requiring each one-sided bound to hold with probability $1-\delta/2$) using the inclusion-exclusion principle.}

\subsection{Proof of Theorem \ref{thm: var_bound}}
\textcolor{black}{We first provide a lemma that is necessary for our proof.}
\begin{lemma} \label{lemma}
    \begin{gather}
        \Pr[J_i < \VaR_\tau(J)] \leq \tau
    \end{gather}
\end{lemma}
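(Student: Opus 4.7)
The plan is to unpack the definition of $\VaR_\tau(J)$ as the infimum of the level set of the CDF, and then use left-continuity of probability (continuity of measure under monotone increasing unions) to translate the strict inequality $J_i < \VaR_\tau(J)$ into a statement about $\CDF$ values just below $\VaR_\tau(J)$.

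Specifically, let $v = \VaR_\tau(J) = \inf\{y \mid \CDF(y) \ge \tau\}$. The first step is to observe that for every $y < v$, the value $y$ fails to lie in the defining set, and hence $\CDF(y) < \tau$. Otherwise $y$ would be an element of $\{y' \mid \CDF(y') \ge \tau\}$ strictly below its infimum $v$, a contradiction.

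The second step is to rewrite the event $\{J_i < v\}$ as the countable monotone union $\bigcup_{n \ge 1} \{J_i \le v - 1/n\}$. Continuity of probability from below gives
\begin{equation}
    \Pr[J_i < v] = \lim_{n \to \infty} \Pr[J_i \le v - 1/n] = \lim_{n \to \infty} \CDF(v - 1/n).
\end{equation}
Since each term in the limit is strictly less than $\tau$ by the previous step, the limit is at most $\tau$, yielding $\Pr[J_i < \VaR_\tau(J)] \le \tau$ as desired.

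The argument is essentially a bookkeeping exercise with the CDF, and I do not anticipate any serious obstacle. The only subtlety worth flagging is that one cannot conclude $\Pr[J_i < v] \le \tau$ directly from $\CDF(v) \ge \tau$, because $\CDF$ can have a jump at $v$ (which is exactly why $\VaR_\tau$ is defined via an infimum rather than an inverse); the left-continuity argument via the countable union is what handles this case cleanly and makes the proof distribution-agnostic, covering continuous, discrete, and mixed distributions of $J$.
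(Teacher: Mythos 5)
Your proof is correct and follows essentially the same route as the paper's: both reduce $\Pr[J_i < \VaR_\tau(J)]$ to a left limit of the CDF at $\VaR_\tau(J)$ and use the infimum definition of $\VaR_\tau$ to show each term in the limit is strictly below $\tau$. Your justification of the left-limit identity via continuity of measure from below on the countable monotone union is, if anything, a cleaner bookkeeping of the step the paper attributes to semicontinuity of the CDF.
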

\begin{proof}
We may rewrite the strict inequality probability using a left limit approaching $\VaR_\tau(J)$:
\begin{gather}
    \Pr[J_i < \VaR_\tau(J)] = \lim_{x \rightarrow \VaR_\tau(J)^{-}} \Pr[J_i \leq x]
\end{gather}
and are guaranteed that the limit exists since CDFs are upper semicontinuous. Since $x$ in the limit satisfies $x < \VaR_\tau(J)$, by definition of $\VaR$ as an infimum we have
\begin{gather}
    \Pr[J_i \leq x] < \tau.
\end{gather}
Since this holds for all $x$ in the limit,
\begin{gather}
    \lim_{x \rightarrow \VaR_\tau(J)^{-}} \Pr[J_i \leq x] \leq \tau
\end{gather}
concluding the proof.
\end{proof}

Now, to prove the main result, from the definition of $\VaR_\tau(J)$ we have
    \begin{subequations}
        \begin{gather}
        \Pr[\VaR_\tau(J) \le J_{(k)}] = \Pr[\sum_{i=1}^n \mathbbm{1}(J_i < \VaR_\tau(J)) < k] \\
        = \Pr[\sum_{i=1}^n \mathbbm{1}(J_i < \VaR_\tau(J)) \le k-1].
    \end{gather}
    \end{subequations}
Since by Lemma \ref{lemma} we have
    \begin{gather}
        \Pr[J_i < \VaR_\tau(J)] \leq \tau \label{eq: relaxation},
    \end{gather}
 the random quantity $\mathbbm{1}(J_i < \VaR_\tau(J))$ is Bernoulli where the probability of being $1$ is at most $\tau$. Thus,
    \begin{align}
        \Pr[\sum_{i=1}^n \mathbbm{1}(J_i < \VaR_\tau(J)) \le k-1] \geq \textup{Bin}(k-1;n,\tau) \label{eq: binomial}
    \end{align}
since we have the sum of $n$ IID Bernoulli random variables with probability of being 1 at most $\tau$. 

Therefore, given values for $\tau, \delta \in (0,1)$ and $J_{1:n}$, choosing
\begin{subequations}
    \begin{gather}
    \overline{\VaR}_{\tau} = J_{(k^*)}, \\
    k^* = \min \{k \mid \textup{Bin}(k-1;n,\tau) \ge 1 - \delta\} \label{eq: var_bound}
    \end{gather}
\end{subequations}
ensures that
\begin{gather}
    \Pr[\VaR_{\tau}(J) \le \overline{\VaR}_{\tau}] \geq 1-\delta.
\end{gather}
\begin{flushright}
    $\blacksquare$
\end{flushright}

\begin{remark}
Note that in the case where $J$ has invertible CDF, (\ref{eq: relaxation}) is tight so that (\ref{eq: binomial}) holds exactly. In fact, we can then give a more precise result in this case,
\begin{align}
    1-\delta \le \Pr[ \textup{VaR}_\tau(J) \leq \overline{\VaR}_\tau] \le 1-\delta + \textup{bin}(k-1;n,\tau)
\end{align}
where $k$ is the order statistic chosen for $\overline{\VaR}_\tau$ and $\textup{bin}(k-1;n,\tau)$ denotes the Binomial probability mass function evaluated at $k-1$ successes. Thus, the amount of conservatism in the coverage is no more than $\textup{bin}(k-1;n,\tau)$. 


\end{remark}

\begin{remark}
Alternatively, one can achieve exactly the desired confidence by randomizing the chosen order statistic~\cite{zielinski2005best}.
\end{remark}

\begin{remark}
Our derivation of the $\VaR$ bound adapts a very similar result from conformal prediction presented below~\cite{hulsman2022distribution}.

\begin{theorem}[Conformal Prediction Conditional Result] \label{thm: conditional}
    Given $n+1$ IID samples $S_{1:n+1}$ from some continuous distribution,
        \begin{multline}
            \Pr\Big[\Pr[S_{n+1} \leq S_{(k)} \mid S_{1:n} ] \geq 1 - \epsilon\Big] = \\ \textup{Bin}(k-1;n,1-\epsilon).
        \end{multline}
\end{theorem}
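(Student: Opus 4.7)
The plan is to reduce the statement to a standard fact about order statistics of uniform random variables. First I would observe that because $S_{n+1}$ is independent of $S_{1:n}$, the inner (conditional) probability can be evaluated by conditioning: given the realization of $S_{1:n}$, we have $\Pr[S_{n+1} \leq S_{(k)} \mid S_{1:n}] = F(S_{(k)})$, where $F$ is the common CDF of the $S_i$. So the outer probability to be computed becomes $\Pr[F(S_{(k)}) \geq 1-\epsilon]$, a statement purely about the marginal distribution of a single order statistic.

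Next I would apply the probability integral transform. Since $F$ is continuous (the theorem's continuity hypothesis is used precisely here), the random variables $U_i := F(S_i)$ are IID Uniform$[0,1]$. Moreover, $F$ is nondecreasing, so the order-statistic map commutes with $F$, i.e.\ $F(S_{(k)}) = U_{(k)}$. Therefore, the quantity of interest reduces to $\Pr[U_{(k)} \geq 1-\epsilon]$, the probability that the $k$th smallest of $n$ IID uniforms exceeds $1-\epsilon$.

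The third step converts this event into a counting statement. The event $\{U_{(k)} \geq 1-\epsilon\}$ is equivalent to $\{\#\{i : U_i < 1-\epsilon\} \leq k-1\}$, i.e.\ at most $k-1$ of the $n$ uniforms fall below the threshold $1-\epsilon$. Each indicator $\mathbbm{1}(U_i < 1-\epsilon)$ is Bernoulli with success probability $1-\epsilon$ (the continuity ensures $\Pr[U_i < 1-\epsilon] = \Pr[U_i \leq 1-\epsilon] = 1-\epsilon$), and the indicators are mutually independent, so their sum is Binomial$(n, 1-\epsilon)$. This gives exactly
\[
\Pr\big[U_{(k)} \geq 1-\epsilon\big] = \textup{Bin}(k-1;\, n,\, 1-\epsilon),
\]
which combined with the previous reductions yields the claimed equality.

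There is no real obstacle here; the continuity assumption is the only subtle ingredient, doing double duty by (i) making the probability integral transform exact, producing genuine Uniform$[0,1]$ variables, and (ii) making the inequalities $<$ and $\leq$ interchangeable when computing the Bernoulli parameter, which is what allows the statement to be an equality rather than just a bound (contrast with the $\VaR$ bound in Theorem \ref{thm: var_bound}, where general CDFs force the step at Eq.\ (\ref{eq: relaxation}) to be an inequality).
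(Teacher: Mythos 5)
Your proof is correct. The engine is the same one the paper uses for Theorem~\ref{thm: var_bound} — reduce the event to "at most $k-1$ of $n$ IID samples fall below a threshold" and recognize a $\Bin(\cdot\,;n,1-\epsilon)$ probability — and the paper's remark explicitly notes that its VaR argument adapts to this result. The one genuine difference is your route through the probability integral transform: by mapping to $U_i = F(S_i) \sim \mathrm{Uniform}[0,1]$ and using $F(S_{(k)}) = U_{(k)})$, you make the Bernoulli parameter exactly $1-\epsilon$ and so obtain the stated \emph{equality}; the paper instead works directly on the raw samples via Lemma~\ref{lemma}, which avoids continuity assumptions but only bounds the Bernoulli parameter, yielding the $\geq \Bin(k-1;n,1-\epsilon)$ version for arbitrary distributions. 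Your closing observation — that continuity is what upgrades the inequality at Eq.~(\ref{eq: relaxation}) to an equality — is exactly the right way to see the relationship between the two statements.
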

In fact, our proof does not assume a continuous distribution and can be applied to adapt the above result to $\geq \Bin(k-1;n,1-\epsilon)$ for any distribution.
\end{remark}

\subsection{Proof of Theorem \ref{thm: exp_bound}}
The result follows as a special case of Theorem~\ref{thm: cvar_bound} proven below. We let $\tau = 0$, noting that $\overline{\CVaR}_\tau = \overline{\E}$ when $\tau = 0$ from the definitions of $\CVaR_\tau$ and $\E$ in Def.~\ref{Def:CVaR} and Def.~\ref{Def:E}.
\begin{flushright}
    $\blacksquare$
\end{flushright}


\subsection{Proof of Theorem \ref{thm: cvar_bound}}
To construct an upper bound for CVaR we first revisit the definition,
\begin{align}        
    \CVaR_\tau(Y) \coloneqq \frac{1}{1-\tau}\int_\tau^1\VaR_\gamma(Y)\,d\gamma.
\end{align}
Note that if we can construct an upper bound on the VaR that holds for all $\gamma$, then we can use this to find an upper bound on the CVaR,
\begin{align}        
    \overline{\CVaR}_\tau(Y) = \frac{1}{1-\tau}\int_\tau^1 \overline{\VaR}_\gamma(Y)\,d\gamma.
\end{align}
Note the slight abuse of notation here, in this proof we require a \textit{simultaneous} VaR bound for which 
\begin{equation}
    \Pr[\overline{\VaR}_\tau \ge \VaR_\tau \ \forall \tau] \ge 1-\delta,
\end{equation}
a stronger requirement than we had for the VaR bound presented in Theorem \ref{thm: var_bound}. Next we describe how we obtain a simultaneous VaR bound.

Consider IID samples $Y_1, \ldots, Y_n$ with unknown distribution given by $\CDF(y)$ and let $y_{ub}$ be an associated almost sure upper bound.~\footnote{We replace our $J$ notation with $Y$ to avoid confusion of $j$ as an index.} Let $\widehat{\CDF}(y)$ be the empirical CDF, 
\begin{gather}
\widehat{\CDF}(y) = \frac{1}{n}\sum_{i = 1}^n\mathbbm{1}(Y_i \le y ),
\end{gather}
and $\epsilon(n,\delta)$ is a constant subtracted from $\widehat{\CDF}(y)$ to obtain a probabilistic lower bound
\begin{align}
    \underbar{\CDF}(y) = \begin{cases}
        \max\{\widehat{\CDF}(y)-\epsilon, 0\} &\text{   if } y<y_{\ub} \\
        \underbar{\CDF}(y) = 1 &\text{   if }y \ge y_{\ub}.
    \end{cases}
\end{align}
By letting
\begin{equation}
\epsilon(n,\delta) = \sqrt{\frac{-\ln \delta}{2n}},
\end{equation} 
(what we call the DKW gap in Def.~\ref{Def:DKWGap}) we know from the DKW bound~\cite{dvoretzky1956asymptotic,massart1990tight} that 
\begin{equation}
    \Pr[\underbar{CDF}(y) \le \CDF(y) \ \forall y] \ge 1 - \delta.
\end{equation} 
Let $\overline{\VaR}_\tau$ be the $\VaR_\tau$ obtained from $\underline{\CDF}(y)$, and note that the DKW bound extends to $\overline{\VaR}_\tau$ to give 
\begin{equation}
\Pr[\overline{\VaR}_\tau \ge \VaR_\tau \ \forall \tau] \ge 1 - \delta.
\end{equation}

To see this, observe that $\underline{\CDF}(y) \le \CDF(y)$ for all $y$ implies $\{y \mid \underline{\CDF}(y) \ge \tau\} \subseteq \{y \mid \CDF(y) \ge \tau\}$, therefore $\inf\{y \mid \underline{\CDF}(y) \ge \tau\} \ge \inf\{y \mid \CDF(y) \ge \tau\}$, since the inf over a subset is greater than or equal to the inf over the larger set.  We have $\underline{\CDF}(y) \le \CDF(y)$ for all $y$ implies $\overline{\VaR}_\tau \ge \VaR_\tau$ for all $\tau$ (from the definition of $\VaR_\tau$ in Def.~\ref{Def:VaR}). The extension of the DKW bound to $\overline{\VaR}_\tau$ follows.

From this bound, we conclude that integrating $\overline{\VaR}_\tau$ over any $\tau$ interval gives an upper bound on the integral of $\VaR_\tau$ over the same interval, which holds with probability at least $1-\delta$.  We proceed to analytically integrate $\overline{\VaR}_\tau$ from $\tau$ to 1 to compute $\overline{\CVaR}_\tau$ based on the definition of $\CVaR_\tau$ in Def.~\ref{Def:CVaR}.

Note that $\overline{\VaR}_\tau$ is a staircase function defined on the domain $\tau \in [0, 1]$, which is equal to the smallest order statistic $Y_{(k)}$ such that $(\frac{k}{n} - \epsilon) \ge 0$ over the interval $\tau \in [0, (\frac{k}{n} - \epsilon)]$. It is then equal to $Y_{(k+1)}$ over the next interval $\tau \in ((\frac{k}{n} - \epsilon), (\frac{k+1}{n} - \epsilon)]$, proceeding to $Y_{(n)}$ over $\tau \in ((\frac{n-1}{n} - \epsilon), (1 - \epsilon)]$, and finally $y_\ub$ over the last interval of $\tau \in ((1 - \epsilon), 1]$. Notice that all intervals are of length $\frac{1}{n}$, except for the first, which is of length $(\frac{k}{n} - \epsilon)$, and the last, which is of length $\epsilon$.  

Integrating this staircase function from a given $\tau$ to 1, therefore, evaluates to a sum over order statistics times the length of their respective intervals. The first order statistic in this sum is the smallest such that its interval appears above the $\tau$ quantile, namely $Y_{(k)}$, where $k$ is the smallest index such that $(\frac{k}{n} - \epsilon) \ge \tau$. The length of this first interval is then $(\frac{k}{n} - \epsilon - \tau)$, and we have for the first term in the sum $(\frac{k}{n} - \epsilon - \tau)Y_{(k)}$, followed by $n-k$ terms of the form $\frac{1}{n}Y_{(i)}$, where $i = k+1, \ldots, n$, and finally the term $\epsilon y_\ub$. Following the definition of $\CVaR_\tau$ in Def.~\ref{Def:CVaR}, we normalize the sum by the length of the interval over which we integrate, $\frac{1}{1-\tau}$, to obtain the desired expression.

The bound holds for any $\tau \in [0, 1)$ and $\delta \in (0, 0.5]$ (this requirement comes from the DKW inequality). To avoid defaulting to $y_{ub}$ we require that there is an index $k \leq n$ such that $(\frac{k}{n} - \epsilon - \tau) \geq 0$. Equivalently, $\epsilon \leq 1 - \tau$ which implies $n \geq -\frac{1}{2}\ln(\delta)/(1-\tau)^2$.

As noted earlier, this $\CVaR$ bound is mathematically equivalent to the one in~\cite{cvar_thomas}, but our derivation results in a different form for the bound expression. Visualizations of the integral form of CVaR are given in Figures 3 and 4 of~\cite{cvar_thomas}.

\begin{flushright}
    $\blacksquare$
\end{flushright}

\subsection{Proof of Theorem \ref{thm: feasibility_bound}}
\textcolor{black}{
In this proof we show that $\Pr[q \le \overline{q}] \ge 1-\delta$ for $\overline{q}$ chosen as described in Theorem \ref{thm: feasibility_bound}. By the definition of $\overline{q}$, note that 
\begin{gather}
    q \le \overline{q} \iff \textup{Bin}(k;n,q) \ge \delta \iff k \ge k^*,
\end{gather}
where 
\begin{gather}
    k^* = \min \{k' \in \{0,\ldots,n\} \mid \textup{Bin}(k';n,q) \ge \delta \}.
\end{gather}
Then
\begin{subequations}
    \begin{align}
    \Pr[q \le \overline{q}] &= \Pr(k \ge k^*) = 1 - \Pr(k \le k^* -1), \\
    &= 1 - \textup{Bin}(k^*-1;n,q) \ge 1-\delta.
\end{align}
\end{subequations}
We know $\textup{Bin}(k^*-1;n,q) < \delta$ by construction of $k^*$. Note that to construct the bound we do not need knowledge of $q$, whatever $q$ might be, our rule for choosing $\overline{q}$ is valid.}
\begin{flushright}
    $\blacksquare$
\end{flushright}

\begin{remark}
\textcolor{black}{
This is the one-sided Clopper-Pearson bound~\cite{clopper1934use} which is known to be unimprovable amongst non-randomized approaches~\cite{wang_clopper}. Note that one could also arrive at this result by employing our VaR bound (Theorem \ref{thm: var_bound}). Going this direction shows that applying conformal prediction to samples from a Bernoulli distribution reduces to the Clopper-Pearson result.}
\end{remark}

\subsection{Proof of Theorem \ref{Thm:ConstraintTest}}
Let $\overline{P}$ be a finite-sample bound for performance measure $\P(g)$ with error rate $\delta$:
\begin{gather}
    \Pr[\P(g) \leq \overline{P}] \geq 1-\delta
\end{gather}
and consider the constraint 
\begin{gather}
    \P(g) \leq C.
\end{gather}

For the test which accepts when $\overline{P} \leq C$ we can upper bound the false acceptance rate as
\begin{align}
\Pr[ \overline{P} \le C \mid \P(g) > C] \leq \Pr[\overline{P} < \P(g)].
\end{align}
Then, by the guaranteed error rate of $\delta$, we get
\begin{align}
\Pr[\overline{P} < \P(g)] = 1 - \Pr[\overline{P} \ge \P(g)] \leq \delta.
\end{align}
Combining, we obtain the desired bound on the test's false acceptance rate
\begin{align}
    \Pr[ \overline{P} \le C \mid \P(g) > C] \leq \delta.
\end{align}
\begin{flushright}
    $\blacksquare$
\end{flushright}

\subsection{Proof of Theorem \ref{thm: uncor_cov}}
Suppose we are given $m$ policies $\mathcal{U}_{1:m}$, with  unknown true performance $\{P^{(i)}\}_{i=1}^m$ and associated probabilistic bounds $\{\overline{P}^{(i)}\}_{i=1}^m$ individually holding with probability at least $1-\delta$ i.e., 
\begin{align}
    \Pr[P^{(i)} \leq \overline{P}^{(i)}] \geq 1-\delta.
\end{align}
Let $\overline{P}^*$ be the lowest probabilistic bound and let $P^*$ be the associated true statistic i.e., 
    \begin{subequations}
        \begin{gather}
            i^* = \argmin_{i=1:m} \overline{P}^{(i)}, \\ 
            \overline{P}^* = \overline{P}^{(i^*)}, \\
            P^* = P^{(i^*)}.
        \end{gather}
    \end{subequations}

First note that the probability of the minimum bound holding is at least as likely as all the bounds holding:
\begin{gather}
    \Pr[P^* \leq \overline{P}^*] \geq \Pr[P^{(i)} \leq \overline{P}^{(i)} \ \forall i].
\end{gather}
Note that even though we do not assume $\mathcal{U}_i$ are independent, we can assert the events $P^{(i)} \le \overline{P}^{(i)}$ and $P^{(j)} \le \overline{P}^{(j)}$ are independent for $i \neq j$ as bound $i$ is generated with different random samples than bound $j$. Applying independence yields
\begin{gather}
    \Pr[P^{(i)} \leq \overline{P}^{(i)} \ \forall i] = \prod_{i=1}^m \Pr[P^{(i)} \leq \overline{P}^{(i)}].
\end{gather}
Each bound holds with probability at least $1-\delta$ so
\begin{gather}
    \prod_{i=1}^m \Pr[P^{(i)} \leq \overline{P}^{(i)}] \geq (1-\delta)^m.
\end{gather}
Combining the steps, conclude that
\begin{gather}
    \Pr[P^* \leq \overline{P}^*] \geq (1-\delta)^m.
\end{gather}

\subsection{Proof of Theorem \ref{thm: cor_cov}}
Applying Theorem \ref{thm: uncor_cov} with $\bar{\delta} = 1 - (1-\delta)^{1/m}$ we get
\begin{align}
    \Pr[P^* \leq \overline{P}^*] \geq (1-\bar{\delta})^m = 1 - \delta.
\end{align}
\begin{flushright}
    $\blacksquare$
\end{flushright}

\subsection{\textcolor{black}{Proof of Corollary \ref{cor: sens_var}}}
\textcolor{black}{
Given
\begin{align}
    \sup_x \quad \text{CDF}_{\mathcal{D}_{sim}}(x) - \text{CDF}_{\mathcal{D}_{true}}(x) \le \alpha,
\end{align}
we know
\begin{align}
    \VaR_\tau(J_{true}) \le \VaR_{\tau + \alpha}(J_{sim}).
\end{align}
Furthermore, 
\begin{subequations}
    \begin{gather}
        \overline{\VaR}_\tau = k^*, \\
    k^* = \min \{k \in \{1,\ldots,n\} \mid  \Bin(k-1;n,\tau) \ge 1 - \delta_{sim}\}.
    \end{gather}
\end{subequations}
Then, utilizing the proof for Theorem \ref{thm: var_bound}, we have
\begin{subequations}
    \begin{align}
     \Pr[\VaR_\tau(J_{true}) \le \overline{\VaR}_\tau] &\ge \Pr[\VaR_{\tau + \alpha}(J_{sim}) \le k^*], \\
    &\ge \textup{Bin}(k^* - 1; n, \tau+\alpha), \\
    &\ge 1 - \delta_{true}
    \end{align}
\end{subequations}
}
\begin{flushright}
    $\blacksquare$
\end{flushright}

\subsection{\textcolor{black}{Proof of Corollary \ref{cor: sens_cvar}}}
\textcolor{black}{
We are given
\begin{align}
    \sup_x \ \text{CDF}_{\mathcal{D}_{sim}}(x) - \text{CDF}_{\mathcal{D}_{true}}(x) \le \alpha \le \sqrt{\frac{-\ln(2\delta_{sim})}{2n} }.
\end{align}
From the proof of Theorem \ref{thm: cvar_bound}, the CVaR bound utilizes a simultaneous lower bound over the CDF of cost. We can interpret the distribution shift by amount $\alpha$ as a shift in the offset of our CDF bound from the empirical CDF,
\begin{align}
    \underbar{\CDF}(y) = \begin{cases}
        \max\{\widehat{\CDF}(y)-\epsilon^\prime, 0\} &\text{   if } y<y_{\ub} \\
        \underbar{\CDF}(y) = 1 &\text{   if }y \ge y_{\ub}.
    \end{cases}
\end{align}
where $\epsilon^\prime = \epsilon - \alpha$. From the DKW inequality~\cite{dvoretzky1956asymptotic,massart1990tight}, we know
\begin{subequations}
    \begin{align}
    \epsilon^\prime &= \sqrt{\frac{-\ln \delta_{true}}{2n}}, \\
    \implies \delta_{true} &= e^{-2n\epsilon^{\prime 2}}, \\
    &= e^{-2n(\epsilon - \alpha)^2}.
\end{align}
\end{subequations}
}
\begin{flushright}
    $\blacksquare$
\end{flushright}

\subsection{\textcolor{black}{Proof of Corollary \ref{cor: sens_fail_prob}}}
\textcolor{black}{
Given $k_{sim}$ observed failures in the simulator out of $n$ trajectories, we have
\begin{align}
    \overline{q} = \max \{q' \in [0,1] \mid \textup{Bin}(k_{sim};n,q') \ge \delta_{sim} \}.
\end{align}
Utilizing the proof of Theorem \ref{thm: feasibility_bound}, we have
\begin{subequations}
    \begin{align}
    \Pr[q_{true} \le \overline{q}] &\ge \Pr[q_{sim}+\alpha \le \overline{q}], \\
    &= \Pr[k_{sim} \ge k^*_{\alpha}],
\end{align}
\end{subequations}
where
\begin{align}
    k^*_{\alpha} = \min \{k \in \{0,\ldots,n\} \mid \textup{Bin}(k;n,q_{sim}+\alpha) \ge \delta_{sim} \}.
\end{align}
Thus,
\begin{subequations}
    \begin{align}
    \Pr[q_{true} \le \overline{q}] &\ge 1 - \Pr[k_{sim} \le k^*_{\alpha} -1], \\
    &= 1 - \textup{Bin}(k_{\alpha}^*-1;n,q_{sim}), \\
    &\ge 1-\delta_{true}.
\end{align}
\end{subequations}
}
\begin{flushright}
    $\blacksquare$
\end{flushright}

\subsection{\textcolor{black}{Proof of Corollary \ref{cor: robust_var}}}
\textcolor{black}{
Given 
\begin{align}
    \sup_x \quad \text{CDF}_{\mathcal{D}_{sim}}(x) - \text{CDF}_{\mathcal{D}_{true}}(x) \le \alpha,
\end{align}
we know
\begin{align}
    \VaR_\tau(J_{true}) \le \VaR_{\tau + \alpha}(J_{sim}).
\end{align}
Therefore,
\begin{subequations}
    \begin{align}
    \Pr[\VaR_\tau(J_{true}) \le \overline{\VaR}_{\tau}(\alpha)], \\
    \ge \Pr[\VaR_{\tau + \alpha}(J_{sim}) \le \overline{\VaR}_{\tau + \alpha}], \\
    \ge 1-\delta.
\end{align}
\end{subequations}
}
\begin{flushright}
    $\blacksquare$
\end{flushright}

\subsection{\textcolor{black}{Proof of Corollary \ref{cor: robust_cvar}}}
\textcolor{black}{
Given 
\begin{align}
    \sup_x \quad \text{CDF}_{\mathcal{D}_{sim}}(x) - \text{CDF}_{\mathcal{D}_{true}}(x) \le \alpha,
\end{align}
we can create a lower bound on $\text{CDF}_{\mathcal{D}_{true}}$ by lowering a bound on $\text{CDF}_{\mathcal{D}_{sim}}$ by amount $\alpha$. That is,
\begin{equation}
    \Pr[\underbar{CDF}(y) \le \text{CDF}_{\mathcal{D}_{true}}(y)\  \ \forall y] \ge 1 - \delta
\end{equation} 
where, following the proof of Theorem \ref{thm: cvar_bound}, and using $\epsilon^\prime = \epsilon + \alpha = \sqrt{-\ln \delta / 2n} + \alpha$,
\begin{align}
    \underbar{\CDF}(y) = \begin{cases}
        \max\{\widehat{\CDF}(y)-\epsilon^\prime, 0\} &\text{   if } y<y_{\ub} \\
        \underbar{\CDF}(y) = 1 &\text{   if }y \ge y_{\ub}.
    \end{cases}
\end{align}
Then, following the proof of Theorem \ref{thm: cvar_bound}, the robust bound is constructed by simply replacing $\epsilon$ in the original bound equation with $\epsilon^\prime$.
}
\begin{flushright}
    $\blacksquare$
\end{flushright}

\subsection{\textcolor{black}{Proof of Corollary \ref{cor: robust_fail_prob}}}
\textcolor{black}{
Given 
\begin{align}
    \sup_x \quad \text{CDF}_{\mathcal{D}_{sim}}(x) - \text{CDF}_{\mathcal{D}_{true}}(x) \le \alpha,
\end{align}
we know $q_{true} \le q_{sim} + \alpha$. Then let
\begin{align}
    k^* = \min \{k' \in \{0,\ldots,n\} \mid \textup{Bin}(k';n,q_{sim}) \ge \delta \}.
\end{align}
With the following implications,
\begin{subequations}
    \begin{align}
    k \ge k^* \implies \Bin(k;n,q_{sim}) \ge \delta, \\
    \implies \Bin(k;n,q_{true} - \alpha) \ge \delta \implies q_{true} \le \overline{q}(\alpha),
\end{align}
\end{subequations}
we know
\begin{subequations}
    \begin{align}
    \Pr[q_{true} \le \overline{q}(\alpha)] \ge \Pr[k \ge k^*], \\
    = 1 - \Pr[k \le k^*-1] \ge 1 - \delta.
\end{align}
\end{subequations}
}
\begin{flushright}
    $\blacksquare$
\end{flushright}

\end{document}